\theoremstyle{plain}
\newtheorem{theorem}{Theorem}[section]
\newtheorem{proposition}[theorem]{Proposition}
\theoremstyle{definition}
\newtheorem{definition}[theorem]{Definition}
\theoremstyle{remark}
\numberwithin{table}{section}
\numberwithin{algorithm}{section}
\numberwithin{equation}{section}
\DeclareRobustCommand\onedot{\futurelet\@let@token\@onedot}
\def\@onedot{\ifx\@let@token.\else.\null\fi\xspace}
\def\eg{\emph{e.g}\onedot}
\def\ie{\emph{i.e}\onedot}
\icmltitlerunning{Tackling Non-Stationarity in Reinforcement Learning via Causal-Origin Representation}
\begin{document}

\twocolumn[
\icmltitle{Tackling Non-Stationarity in Reinforcement Learning via\\ Causal-Origin Representation}

% It is OKAY to include author information, even for blind
% submissions: the style file will automatically remove it for you
% unless you've provided the [accepted] option to the icml2024
% package.

% List of affiliations: The first argument should be a (short)
% identifier you will use later to specify author affiliations
% Academic affiliations should list Department, University, City, Region, Country
% Industry affiliations should list Company, City, Region, Country

% You can specify symbols, otherwise they are numbered in order.
% Ideally, you should not use this facility. Affiliations will be numbered
% in order of appearance and this is the preferred way.
\icmlsetsymbol{equal}{*}

\begin{icmlauthorlist}
\icmlauthor{Wanpeng Zhang}{pku_cs}
\icmlauthor{Yilin Li}{pku_stat}
\icmlauthor{Boyu Yang}{fdu}
\icmlauthor{Zongqing Lu}{pku_cs,baai}
\end{icmlauthorlist}

\icmlaffiliation{pku_cs}{School of Computer Science, Peking University}
\icmlaffiliation{pku_stat}{Center for Statistical Science, Peking University}
\icmlaffiliation{fdu}{School of Data Science, Fudan University}
\icmlaffiliation{baai}{Beijing Academy of Artificial Intelligence}

\icmlcorrespondingauthor{Zongqing Lu}{zongqing.lu@pku.edu.cn}

% You may provide any keywords that you
% find helpful for describing your paper; these are used to populate
% the "keywords" metadata in the PDF but will not be shown in the document
\icmlkeywords{Reinforcement Learning, Non-Stationarity, Causal Structure}

\vskip 0.3in
]

% this must go after the closing bracket ] following \twocolumn[ ...

% This command actually creates the footnote in the first column
% listing the affiliations and the copyright notice.
% The command takes one argument, which is text to display at the start of the footnote.
% The \icmlEqualContribution command is standard text for equal contribution.
% Remove it (just {}) if you do not need this facility.

\printAffiliationsAndNotice{}  % leave blank if no need to mention equal contribution
% \printAffiliationsAndNotice{\icmlEqualContribution} % otherwise use the standard text.

\begin{abstract}
In real-world scenarios, the application of reinforcement learning is significantly challenged by complex non-stationarity. Most existing methods attempt to model changes in the environment explicitly, often requiring impractical prior knowledge of environments. In this paper, we propose a new perspective, positing that non-stationarity can propagate and accumulate through complex causal relationships during state transitions, thereby compounding its sophistication and affecting policy learning. We believe that this challenge can be more effectively addressed by implicitly tracing the causal origin of non-stationarity. To this end, we introduce the \textbf{C}ausal-\textbf{O}rigin \textbf{REP}resentation (\textbf{COREP}) algorithm. COREP primarily employs a guided updating mechanism to learn a stable graph representation for the state, termed as causal-origin representation. By leveraging this representation, the learned policy exhibits impressive resilience to non-stationarity. We supplement our approach with a theoretical analysis grounded in the causal interpretation for non-stationary reinforcement learning, advocating for the validity of the causal-origin representation. Experimental results further demonstrate the superior performance of COREP over existing methods in tackling non-stationarity problems. The code is available at \href{https://github.com/PKU-RL/COREP}{https://github.com/PKU-RL/COREP}.
\end{abstract}

\section{Introduction}

Rapid advancements in reinforcement learning (RL) \citep{kaelbling1996reinforcement, sutton2018reinforcement} have led to significant performance gains in various domains \citep{silver2018general, mirhoseini2021graph}. However, a common assumption in many RL algorithms is the stationarity of the environment, which can limit the applicability in real-world scenarios characterized by varying dynamics \citep{padakandla2020reinforcement, padakandla2021survey}. Recent efforts in the meta-RL approaches \citep{finn2017model} have attempted to tackle this by enabling algorithms to adapt to changes \citep{poiani2021meta}. However, these methods struggle in the face of more complex and unpredictable environmental dynamics \citep{sodhani2022block, feng2022factored}. Approaches like FN-VAE \citep{feng2022factored} and LILAC \citep{xie2020deep} have made strides towards improving RL algorithms in non-stationary environments by explicitly modeling the change factors of the environment. Nevertheless, they may not comprehensively capture the complexity of real-world non-stationarity. This gap highlights the need for a more robust approach to handle the intricacies of non-stationary environments in RL.

In this paper, we propose a novel setting for efficiently tackling non-stationarity in RL from a new perspective inspired by the causality literature \citep{zhang2020domain, huang2020causal}. We contend that minor changes in dynamics can cause significant shifts in observations due to their propagation through intricate causal relationships in the dynamics. Thus, we need to trace the ``causal origin'' of these changes. However, directly constructing a causal graph that captures such information is a significant challenge due to the inherent instability in non-stationary environments \citep{strobl2019improved}. To address this, we introduce the Causal-Origin Representation (COREP) algorithm. COREP employs a guided update mechanism, which enables the learning of a stable graph representation of state, termed as \textit{causal-origin representation}. This representation aims to capture the underlying causal structure in a way that is resilient to the unpredictable changes characteristic of non-stationary environments, aiding RL algorithms to adapt and learn policies in such settings.

Specifically, we first propose a novel formulation of non-stationarity in RL as the mixture of decomposed sub-environments. In this formulation, we rewrite dynamics functions using masks to represent causal relationships, assumed invariant within each sub-environment. However, identifying these causal relationships without prior knowledge of the sub-environments poses a significant challenge. To overcome this, we propose the environment-shared union graph that captures causal relationship information among state elements. This is achieved by combining Maximal Ancestral Graphs (MAGs) from each sub-environment. We also provide theoretical support for the feasibility of recovering this environment-shared union graph. 

To effectively learn the proposed graph representation, we design a dual graph structure comprising a core-graph and a general-graph. The core graph focuses on learning a graph representation that is stable to dynamic changes, guided by a TD (Temporal Difference) error based updating mechanism. While it concentrates on learning the most essential parts of the graph representation, some edge information might be overlooked. To address this, we employ a continuously updating general-graph to compensate for potential information loss and enhance the algorithm's adaptability. By integrating the core-graph and general-graph, we can finally construct the causal-origin representation, providing a comprehensive understanding of the environment's dynamics and significantly mitigating the impact of non-stationarity problems in RL.

Our main contributions can be summarized as follows: 
\begin{itemize}
    \item We provide a causal interpretation for non-stationary RL and propose a novel setting that focuses on the causal relationships within states;
    \item Based on the proposed formulation and setting, we design a modular algorithm that can be readily integrated into existing RL algorithms;
    \item We provide a theoretical analysis that offers both inspiration and theoretical support for our algorithm. Experimental results further demonstrate the effectiveness of our algorithm.
\end{itemize}

\section{Preliminaries}

\textbf{Problem Formulation.} Reinforcement learning problems are typically modeled as Markov Decision Processes (MDPs), defined as a tuple $(\mathcal{S}, \mathcal{A}, \mathcal{P}, \mathcal{R}, \gamma)$, where $\mathcal{S}$ is the state space, $\mathcal{A}$ is the action space, $\mathcal{P}: \mathcal{S} \times \mathcal{A} \times \mathcal{S} \rightarrow [0, 1]$ represents the transition probability, $\mathcal{R}: \mathcal{S} \times \mathcal{A} \rightarrow \mathbb{R}$ is the reward function, and $\gamma \in [0, 1)$ is the discount factor. We may also use the form of a dynamics function: $\bm{s}^\prime = f(\bm{s},\bm{a}, \varepsilon)$ when the environment is deterministic. Here, $\varepsilon$ represents random noise. The goal of an agent in RL is to find a policy $\pi: \mathcal{S} \rightarrow \mathcal{A}$ that maximizes the expected cumulative discounted reward, defined as the value function $V^{\pi}(\bm{s}) = \mathbb{E}_{\pi}[\sum_{t=0}^{\infty} \gamma^t r_t | \bm{s}_0 = \bm{s}]$, where $r_t$ is the reward at time step $t$.

In non-stationary environments, the dynamics of the environment can change over time. Our goal is to learn a policy $\pi$ that can adapt to the non-stationary environment and still achieve high performance. For simplicity of notations, we provide theoretical analysis in the form of the deterministic dynamics function $f$.

\textbf{Causal Structure Discovery.} Causal structure discovery usually aims at inferring causation from data, modeled with a directed acyclic graph (DAG) $\mathcal{D}=(V,E)$, where the set of nodes $V$ includes the variables of interest, and the set of directed edges $E$ contains direct causal effects between these variables \citep{pearl2000Causality}.
The causal graph is a practical tool that relates the conditional independence relations in the generating distribution to separation statements in the DAG ($d$-separation) through the Markov property \citep{lauritzen1996graphical}. If there exist unobserved confounders in the dynamics, maximal ancestral graphs (MAGs) $\mathcal{M}=(V,D,B)$ are often used to represent observed variables by generalizing DAGs with bidirected edges which depicts the presence of latent confounders \citep{richardson2002ancestral}. The sets $D,B$ stand for directed and bidirected edges, respectively.

We also provide the definition of the \textit{partial order}, a basic concept necessary for the theoretical analysis in the manuscript. A partial order, $\pi$, on a DAG is defined to represent a relationship between nodes where their order is not strictly defined but still satisfies three properties as follows: (1) Reflexivity: each node is related to itself \ie, $A <_{\pi} A$; (2) Antisymmetry: if $A <_{\pi} B$ and $B <_{\pi} A$, then $A =_{\pi} B$; (3) Transitivity: if $A <_{\pi} B$ and $B <_{\pi} C$, then $A <_{\pi} C$. When this relationship between $A$ and $B$ is not defined, \ie, neither $A <_{\pi} B$ nor $B <_{\pi} A$, we refer to this case as $A \not\lessgtr B$ with $\pi$, or $A \not\lessgtr_\pi B$.

\textbf{Graph Neural Networks (GNNs).} GNNs are a class of neural networks designed for graph-structured data \citep{scarselli2008graph, zhou2020graph}. Given a graph $\mathcal{G}=(V,E)$, GNNs aim to learn a vector representation for each node $v\in V$ or the entire graph $\mathcal{G}$, leveraging the information of both graph structure and node features. 
GNNs generally follow the message passing framework~\citep{DBLP:conf/iclr/BalcilarRHGAH21}. Layers in GNN can be formulated as ${\bm{H}}^{(l)}=\sigma\big(\sum_s{\bm{L}}_s{\bm{H}}^{(l-1)}{\bm{W}}_s^{(l)}\big)$, where ${\bm{H}}^{(l)}$ is the node representation outputted by the $l$-th layer, ${\bm{L}}_s$ is the $s$-th convolution support which defines how the node features are propagated, ${\bm{W}}_s^{(l)}$ is learnable parameters for the $s$-th convolution support in the $l$-th layer, and $\sigma(\cdot)$ is the activation function. Graph Attention Network (GAT)~\citep{velivckovic2017graph} is a special type of GNN following the message passing framework. Instead of handcrafting, the self-attention mechanism is used to compute the support convolutions in each GAT layer, where the adjacency matrix plays the role of a mask matrix for computing the attention.  

\section{Methodology}

\subsection{Motivation and Key Idea}\label{sec:idea}

In the COREP algorithm, the primary goal is to tackle non-stationarity problems in RL by learning the underlying graph structure of the environment, which we term \textit{causal-origin representation}. This representation strives to be both causal, meaning it can reflect the cause-effect relationships among state elements, and stable, meaning it is robust to the environment's changes. To achieve this, we design a dual GAT structure consisting of a core-GAT and a general-GAT. The core-GAT focuses on learning the stable part of the environment's causal graph, with its learning guided by a specific updating mechanism. The general-GAT, on the other hand, is continually updated to capture any additional information that the core-GAT might overlook. Together, these two GATs form a comprehensive understanding of the environment.

Specifically, in constructing the causal-origin representation, we first transform the states of the environment into node features and create a weighted adjacency matrix to represent the connections between these features. We then apply a self-attention mechanism, which helps the algorithm focus on the most relevant parts of each node. The update of the core-GAT is guided by a TD error detection mechanism, which helps identify the most significant changes in the environment. To further enhance learning efficiency, we integrate the causal-origin representation into a Variational Autoencoder (VAE) framework \citep{kingma2013auto}. Additionally, we introduce regularization terms to improve the identifiability of causal relationships and to ensure the structural integrity of the causal-origin representation. The overall framework is shown in Figure \ref{fig:COR-framework}.

\subsection{Causal Interpretation of Non-Stationary RL} \label{sec:theory}

In this part, we will propose a causal interpretation of non-stationarity in RL, which provides us with inspiration and theoretical support for the algorithm design.
First, for the standard dynamics function $\bm{s}^\prime = f(\bm{s},\bm{a},\varepsilon)$ in RL, in order to better discuss the relationship between elements, we rewrite it as 
\begin{equation}\label{eq:transition}
\begin{gathered}
s_{i}^\prime = f\left({\bm{c}}_{i}^{s \shortrightarrow s} \odot {\bm{s}}, {\bm{c}}_{i}^{a \shortrightarrow s} \odot {\bm{a}}, \varepsilon\right), \\
\bm{s}^\prime = \left(s_1^\prime,\ldots,s_{d_s}^\prime\right),
\end{gathered}
\end{equation}
where $\bm{s}$ denotes the state with dimension $d_s$, $s^\prime_i$ is the $i$-th element of next state $\bm{s}^\prime$, $\varepsilon$ is random noise, and $\odot$ denotes the Hadamard product (element-wise product). The masks $ {\bm{c}}^{ \cdot \shortrightarrow s} $ represent the structural dependence among elements in the following way, \eg, the $j$-th element of ${\bm{c}}_{i}^{s \shortrightarrow s}\in\{0,1\}^{d_s}$ equals $1$ if and only if $s_{j}$ causally affects $s_{i}$. Similarly, ${\bm{c}}_{i}^{a \shortrightarrow s}$ is also defined in the same way. In particular, if only the $i$-th element in ${\bm{c}_i}^{ \cdot \shortrightarrow s}$ equals $1$ and all others are $0$, it represents that each element only causally affects itself. In this case, Equation (\ref{eq:transition}) simplifies to a standard dynamics function.

As mentioned before, when the environment becomes non-stationary, the causal relationships between elements become more intricate. In addition to the relationships between states and actions, there may also be external factors causing environmental changes. These factors can have causal relationships with both state and action, yet are not explicitly considered in Equation (\ref{eq:transition}). Intuitively, we assume the presence of hidden states $\bm{h}$ in the dynamics. Similarly, we define the underlying dynamics function:
\begin{equation}\label{eq:underlying-transition}
\begin{gathered}
h_{i}^\prime=g\left({\bm{c}}^{h \shortrightarrow h}_{i} \odot {\bm{h}}, {\bm{c}}^{s \shortrightarrow h}_{i} \odot {\bm{s}}, {\bm{c}}^{a \shortrightarrow h}_{i} \odot {\bm{a}}, \varepsilon\right), \\
\bm{h}^\prime = \left(h_1^\prime,\ldots,h_{d_h}^\prime\right).
\end{gathered}
\end{equation}

Considering the influence of the hidden state, the reward function can be similarly rewritten as:
\begin{equation}\label{eq:reward-function}
r=\mathcal{R}\left({\bm{c}}^{s \shortrightarrow r} \odot {\bm{s}}, {\bm{c}}^{h \shortrightarrow r} \odot {\bm{h}}, {\bm{c}}^{a \shortrightarrow r} \odot {\bm{a}}, \varepsilon\right).
\end{equation}

The masks can be combined into matrix form as ${\bm{C}}^{\cdot \shortrightarrow s}:=[{\bm{c}}_{i}^{\cdot \shortrightarrow s}]_{i=1}^{d_s}$, ${\bm{C}}^{\cdot \shortrightarrow h}:=[{\bm{c}}_{i}^{\cdot \shortrightarrow h}]_{i=1}^{d_h}$.
Some previous work assumed that similar masks are invariant over time, and simply encoded them into some change factors \citep{huang2022adarl}. 
Instead, we allow such ${\bm{C}}^{\cdot \shortrightarrow \cdot}$ to be time-varying, and propose a novel causal interpretation based on an environment-shared union graph representation to capture the transition information in the non-stationary environment.

With the defined underlying dynamics, we can represent the non-stationarity as being governed by specific causal relationships. Specifically, we regard a non-stationary environment as a mixture of various stationary sub-environments. The non-stationarity is thus interpreted as changes over time in the mixture distribution of these sub-environments. \textit{It's important to emphasize that the interpretation of non-stationarity serves primarily as theoretical insight. In practical algorithm designs, we leverage this property in an approximate manner and do not require the environment to be explicitly decomposed into sub-environments.}

When considering a sample from the $k$-th sub-environment, the defined masks ${\bm{C}}_{(k)}^{\cdot \shortrightarrow \cdot}$ are used to represent invariant relationships within that sub-environment, meaning the current mask ${\bm{C}}_\text{current}^{\cdot \shortrightarrow \cdot}={\bm{C}}_{(k)}^{\cdot \shortrightarrow \cdot}$. The relationships among variables such as states, actions, and hidden states, are characterized by DAGs $\mathcal{D}_{(k)} = (V, E_{(k)})$ over the same node set $V=\{{\bm{s}}, {\bm{h}}, {\bm{a}}, {\bm{s}}^{\prime}, {\bm{h}}^{\prime}, {\bm{a}}^{\prime}, r, e\}$. Here, $e$ represents the environment label with an in-degree of $0$, and $E_{(k)}$ is the set of directed edges in each DAG. Edges in $E_{(k)}$ are determined by corresponding masks, \ie, an edge from node $v_i$ to $v_j (v_i, v_j \in V)$ exists in $E_{(k)}$ if and only if ${\bm{c}}_{(k)}^{v_i \shortrightarrow v_j} = 1$. Additionally, $E_{(k)}$ also includes edges from $e$ to $({\bm{s}}, {\bm{h}})$, which are subject to variation across different sub-environments. In this setting, non-stationarity is reflected by the changes in the underlying graph structure of these DAGs. 

To better understand the causal interpretation, we provide an example case involving two sub-environments, as shown in Figure \ref{fig:MAG}. This illustrates how the proposed union graph structure effectively captures the non-stationarity in the environment. For more detailed explanations, please refer to Appendix \ref{sec:full-theory}. Additionally, a toy example is provided in Appendix \ref{sec:toy-example} to further explain this concept.

\subsection{Union Graph of the Causal Structure}

In the context of the above causal interpretation, a significant challenge arises when learning the structure of a causal graph without access to the environment label $e$, which can be considered a latent confounder that leads to spurious correlations. In the presence of unobservable nodes, maximal ancestral graph (MAG) is a useful tool to generalize DAGs \citep{richardson2002ancestral}. For each environment-specific DAG $\mathcal{D}_{(k)}$, we can construct a corresponding MAG $\mathcal{M}_{(k)}$ \citep{sadeghi2013stable}, as outlined in Algorithm \ref{algo:MAG}. In these MAGs, bidirected edges ($\leftrightarrow$) are used to characterize the change of marginal distribution of ${\bm{s}}, {\bm{h}}$ over different sub-environments. To model structural relationships in non-stationary RL with a unified approach, we further encode the relations among all actions, states, hidden states, and rewards with an environment-shared union graph $\mathcal{M}_{\cup}$, as defined by Definition \ref{def:env} below.

\begin{definition}[Environment-shared union graph]\label{def:env}
The environment-shared union graph $\mathcal{M}_{\cup} := (V, D, B)$ has the set of nodes $V$,  the set of directed edges
\[
D = \{u\rightarrow v:u,v \in V, \exists k \text{ such that } u \rightarrow v \text{ in } \mathcal{M}_{(k)}\},
\]
and the set of bidirected edges
\[
B = \{u\leftrightarrow v:u,v \in V, \exists k \text{ such that } u \leftrightarrow v \text{ in } \mathcal{M}_{(k)}\}.
\]
\end{definition}

The above defined union graph $\mathcal{M}_{\cup}$ contains no cycle because Equation (\ref{eq:transition}-\ref{eq:reward-function}) implies that there exists a common topological ordering for $\left\{\mathcal{D}_{(k)}\right\}$, see Appendix \ref{sec:full-theory} for details.
Without knowing the label of the $k$-th sub-environment, we cannot generally identify the structure of $\mathcal{M}_{(k)}$ for each $k$ from the observed data. However, we can show that $\mathcal{M}_{\cup}$ is still a MAG, hence any non-adjacent pair of nodes is $d$-separated given some subset of nodes. 

\begin{proposition}\label{thm:idn}
Suppose that the dynamics follows Equation (\ref{eq:transition}-\ref{eq:reward-function}), 
then there exists a partial order $\pi$ on $V$ such that (a) $u$ is an ancestor of $v$ $\Rightarrow u<_\pi v \text{ in } \mathcal{M}_{(k)}$; and (b) $ u \leftrightarrow v \Rightarrow u \not\lessgtr_\pi v  \text{ in } \mathcal{M}_{(k)}$. As a consequence, the environment-shared union graph $\mathcal{M}_{\cup}$ is a MAG.
\end{proposition}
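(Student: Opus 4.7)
The plan is to exhibit an explicit partial order $\pi$ on $V$, dictated by the temporal layering of the dynamics in equations (\ref{eq:transition})-(\ref{eq:reward-function}), verify conditions (a) and (b) for every sub-environment $k$, and then use these properties to conclude that $\mathcal{M}_\cup$ is a MAG.

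To construct $\pi$, I would place $e$ strictly below every other node, since $e$ has in-degree zero and only emits arrows into $\bm{s},\bm{h}$. Equation (\ref{eq:transition}) forces $\bm{s},\bm{a} <_\pi \bm{s}'$; equation (\ref{eq:underlying-transition}) forces $\bm{s},\bm{h},\bm{a} <_\pi \bm{h}'$; equation (\ref{eq:reward-function}) forces $\bm{s},\bm{h},\bm{a} <_\pi r$; and the next-step action satisfies $\bm{s}',\bm{h}' <_\pi \bm{a}'$ via the policy. Nodes that do not enter each other's structural equations (for example $\bm{s}$ and $\bm{h}$ at the same step, or $\bm{s}'$ and $r$) are left $\pi$-incomparable. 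Crucially, since equations (\ref{eq:transition})-(\ref{eq:reward-function}) have the same functional form across sub-environments and only the masks ${\bm{C}}_{(k)}^{\cdot\shortrightarrow\cdot}$ vary, this ordering is common to every $\mathcal{D}_{(k)}$ and therefore to every $\mathcal{M}_{(k)}$.

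Part (a) then follows by induction on directed path length: every directed edge of $\mathcal{M}_{(k)}$ respects $\pi$ by construction, and transitivity of $\pi$ propagates the inequality along an ancestral chain. For part (b), I would invoke the MAG construction referenced in Algorithm \ref{algo:MAG}: a bidirected edge between $u$ and $v$ in $\mathcal{M}_{(k)}$ arises exactly when $u$ and $v$ share a latent ancestor (here ultimately traceable to $e$) while neither is an ancestor of the other. Such a configuration forces $u \not\lessgtr_\pi v$, giving (b).

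To conclude that $\mathcal{M}_\cup$ is a MAG, observe that every directed edge of $\mathcal{M}_\cup$ inherits the $\pi$-respecting property from some $\mathcal{M}_{(k)}$, so $\mathcal{M}_\cup$ has no directed cycles. If $u \leftrightarrow v$ is present in $\mathcal{M}_\cup$ it appears in some $\mathcal{M}_{(k)}$, so by (b) its endpoints are $\pi$-incomparable; combined with (a) extended across the union, neither endpoint can be an ancestor of the other in $\mathcal{M}_\cup$, ruling out almost-directed cycles. The genuine obstacle is maximality: for every non-adjacent pair in $\mathcal{M}_\cup$ one must exhibit a $d$-separating subset. I expect to discharge this by showing that no inducing path can exist between non-adjacent nodes, using the common topological order to assemble a separator from the pair's $\pi$-predecessors, or equivalently by realizing $\mathcal{M}_\cup$ as the latent projection onto $V \setminus \{e\}$ of a single DAG built uniformly across sub-environments, which via the Richardson-Spirtes construction is automatically maximal. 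This last step — reconciling the union-of-MAGs construction with the latent-projection view in a way that carries the maximality over — is where the real technical work will lie.
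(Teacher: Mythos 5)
Your construction is essentially the paper's: the paper also defines $\pi$ by the temporal layering $\{{\bm{s}},{\bm{h}},{\bm{a}}\} <_\pi \{{\bm{s}}',{\bm{h}}',{\bm{a}}'\} <_\pi r'$, observes via Algorithm \ref{algo:MAG} that bidirected edges occur only among the children of $e$ (all inside $\{{\bm{s}},{\bm{h}}\}$, hence $\pi$-incomparable), and checks (a) and (b) exactly as you do; your slightly finer order (adding ${\bm{s}}',{\bm{h}}'<_\pi {\bm{a}}'$ and placing $e$ in the order) is harmless since it only adds relations where no reverse edges or bidirected edges exist. The one step you leave open --- maximality of $\mathcal{M}_{\cup}$ --- is the step the paper does not prove from scratch either: it is discharged by citing Lemma 4.3 of \citep{saeed2020causal}, which states precisely that the union of MAGs compatible with a common partial order is again a MAG. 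So the first of your two proposed routes (union of $\pi$-compatible MAGs) is the right one, and the ``real technical work'' you anticipate is already packaged in that lemma rather than something the proof must redo.
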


We provide the complete proofs and a detailed explanation of conditions in Appendix \ref{sec:full-theory}. Proposition \ref{thm:idn} provides theoretical support for recovering the structure of the environment-shared union graph $\mathcal{M}_{\cup}$. In the following sections, we will describe how the COREP algorithm learns a policy that is stable under non-stationarity by utilizing the environment-shared representation union graph $\mathcal{M}_{\cup}$.

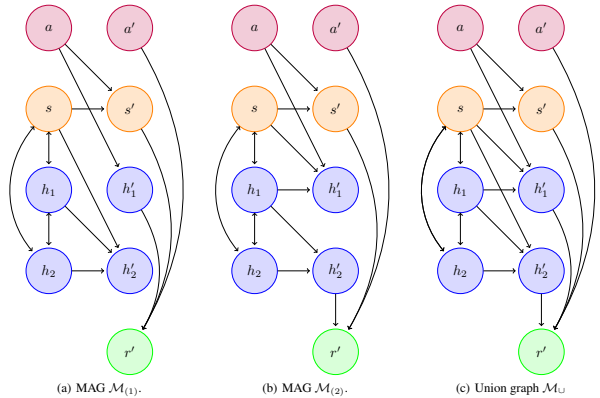
\begin{figure}[t]
\centering
\resizebox{\linewidth}{!}{
\subfigure[MAG $\mathcal{M}_{(1)}$.]{
\begin{tikzpicture}[scale=1,
->,
shorten >=2pt]
\node[circle,
minimum width = 32pt, draw=purple, fill=purple!20] (1) at(0,6){$a$};
\node[circle,
minimum width = 32pt, draw=orange, fill=orange!20] (2) at(0,4){$s$};
\node[circle,
minimum width =32pt ,draw=blue, fill=blue!15] (3) at(0,2){$h_{1}$};
\node[circle,
minimum width =32pt ,draw=blue, fill=blue!15] (4) at(0,0){$h_{2}$};
\node[circle,
minimum width =32pt,draw=purple, fill=purple!20] (5) at(2,6){$a^{\prime}$};
\node[circle,
minimum width =32pt ,draw=orange, fill=orange!20] (6) at(2,4){$s^{\prime}$};
\node[circle,
minimum width =32pt ,draw=blue, fill=blue!15] (7) at(2,2){$h_{1}^{\prime}$};
\node[circle,
minimum width =32pt ,draw=blue, fill=blue!15] (8) at(2,0){$h_{2}^{\prime}$};
\node[circle,
minimum width =32pt ,draw=green, fill=green!15] (9) at(2,-2){$r^{\prime}$};
\draw[<->] (2) --(3);
\draw[<->] (3) --(4);
\draw[<->] (2) to [out=230,in=130] (4);
\draw[->] (1) --(6);
\draw[->] (1) --(7);
\draw[->] (2) --(6);
\draw[->] (2) --(8);
\draw[->] (3) --(8);
\draw[->] (4) --(8);
\draw[->] (5) to [out=300,in=60] (9);
\draw[->] (6) to [out=300,in=60] (9);
\draw[->] (7) to [out=300,in=60] (9);
\end{tikzpicture}
}
\subfigure[MAG $\mathcal{M}_{(2)}$.]{
\centering
\begin{tikzpicture}[scale=1,
->,
shorten >=2pt]
\node[circle,
minimum width = 32pt, draw=purple, fill=purple!20] (1) at(0,6){$a$};
\node[circle,
minimum width = 32pt, draw=orange, fill=orange!20] (2) at(0,4){$s$};
\node[circle,
minimum width =32pt ,draw=blue, fill=blue!15] (3) at(0,2){$h_{1}$};
\node[circle,
minimum width =32pt ,draw=blue, fill=blue!15] (4) at(0,0){$h_{2}$};
\node[circle,
minimum width =32pt,draw=purple, fill=purple!20] (5) at(2,6){$a^{\prime}$};
\node[circle,
minimum width =32pt ,draw=orange, fill=orange!20] (6) at(2,4){$s^{\prime}$};
\node[circle,
minimum width =32pt ,draw=blue, fill=blue!15] (7) at(2,2){$h_{1}^{\prime}$};
\node[circle,
minimum width =32pt ,draw=blue, fill=blue!15] (8) at(2,0){$h_{2}^{\prime}$};
\node[circle,
minimum width =32pt ,draw=green, fill=green!15] (9) at(2,-2){$r^{\prime}$};
\draw[<->] (2) --(3);
\draw[<->] (3) --(4);
\draw[<->] (2) to [out=230,in=130] (4);
\draw[->] (1) --(6);
\draw[->] (1) --(7);
\draw[->] (2) --(6);
\draw[->] (2) --(7);
\draw[->] (3) --(7);
\draw[->] (3) --(8);
\draw[->] (4) --(8);
\draw[->] (5) to [out=300,in=60] (9);
\draw[->] (6) to [out=300,in=60] (9);
\draw[->] (8) -- (9);
\end{tikzpicture}
}
\subfigure[Union graph $\mathcal{M}_{\cup}$]{
\begin{tikzpicture}[scale=1,
->,
shorten >=2pt]
\node[circle,
minimum width = 32pt, draw=purple, fill=purple!20] (1) at(0,6){$a$};
\node[circle,
minimum width = 32pt, draw=orange, fill=orange!20] (2) at(0,4){$s$};
\node[circle,
minimum width =32pt ,draw=blue, fill=blue!15] (3) at(0,2){$h_{1}$};
\node[circle,
minimum width =32pt ,draw=blue, fill=blue!15] (4) at(0,0){$h_{2}$};
\node[circle,
minimum width =32pt,draw=purple, fill=purple!20] (5) at(2,6){$a^{\prime}$};
\node[circle,
minimum width =32pt ,draw=orange, fill=orange!20] (6) at(2,4){$s^{\prime}$};
\node[circle,
minimum width =32pt ,draw=blue, fill=blue!15] (7) at(2,2){$h_{1}^{\prime}$};
\node[circle,
minimum width =32pt ,draw=blue, fill=blue!15] (8) at(2,0){$h_{2}^{\prime}$};
\node[circle,
minimum width =32pt ,draw=green, fill=green!15] (9) at(2,-2){$r^{\prime}$};
\draw[<->] (2) --(3);
\draw[<->] (3) --(4);
\draw[<->] (2) to [out=230,in=130] (4);
\draw[->] (1) --(6);
\draw[->] (1) --(7);
\draw[->] (2) --(6);
\draw[->] (2) --(7);
\draw[->] (3) --(7);
\draw[->] (3) --(8);
\draw[->] (4) --(8);
\draw[->] (5) to [out=300,in=60] (9);
\draw[->] (6) to [out=300,in=60] (9);
\draw[->] (8) -- (9);
\draw[<->] (2) to [out=230,in=130] (4);
\draw[->] (2) --(8);
\draw[->] (7) to [out=300,in=60] (9);
\end{tikzpicture}
}
}
\caption{MAG representations for two sub-environments and their union graph. In this example, the union graph is capable of representing all possible kinds of causal relationships within the changing dynamics. More explanations can be found in Appendix \ref{sec:full-theory}.}
\label{fig:MAG}
\end{figure}

\subsection{Dual Graph Attention Network Structure}\label{sec:gat-structure}

\begin{figure*}[t]
    \centering
    \includegraphics[width=.98\textwidth]{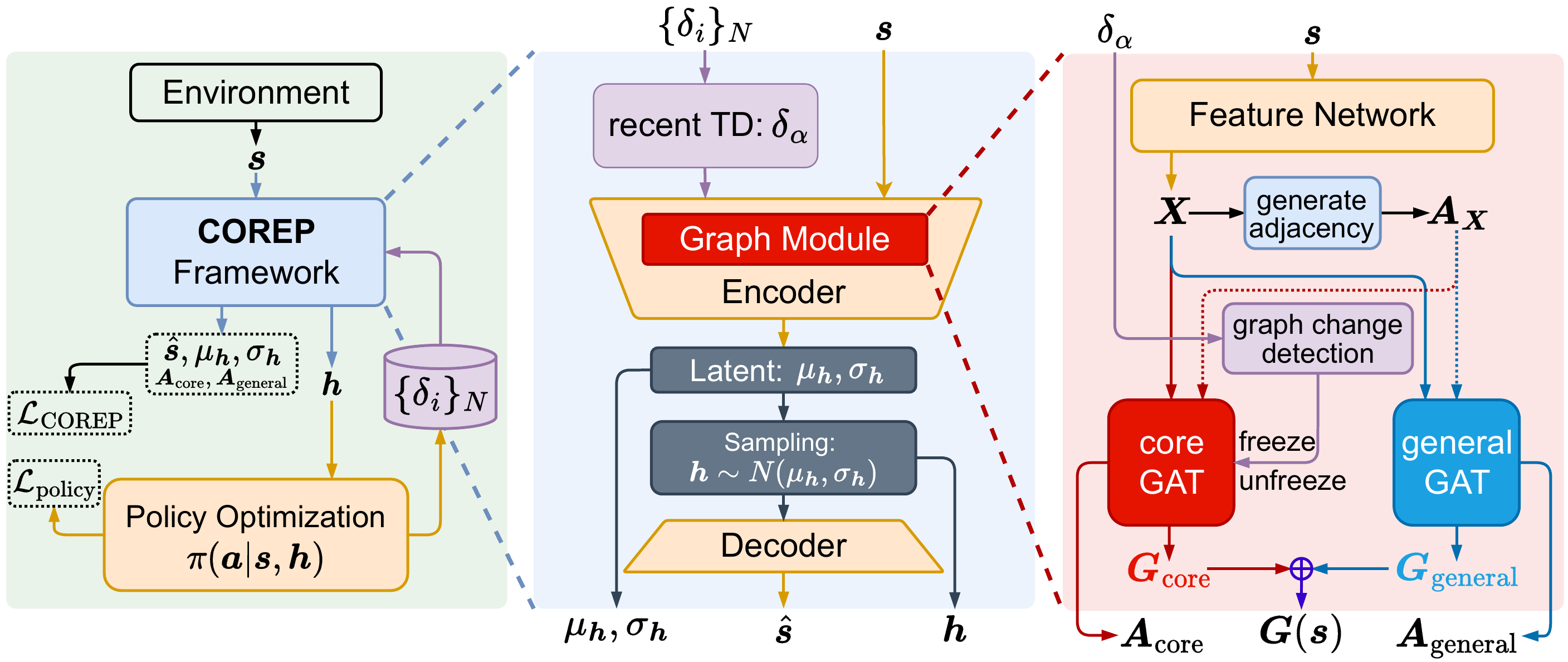}
    \caption{Overview of the COREP framework. (1) The left part illustrates that the COREP framework can be seamlessly incorporated into any RL algorithm. It takes the state as input and outputs the causal-origin representation for policy optimization. (2) The middle part shows the VAE structure employed by the COREP framework, which is utilized to enhance the learning efficiency. (3) The right part highlights the key components of COREP. The dual GAT structure is designed in line with the concept of causal-origin representation to retain the essential parts of the graph. The TD error detection can guide the core-GAT to learn the environment-shared union graph based on our theory. The general-GAT is continuously updated to compensate for the potential loss of information.}
    \label{fig:COR-framework}

\end{figure*}

In this section, we will focus on the structure design. The detailed update mechanism will be discussed in Section \ref{sec:updating}.

In line with Proposition \ref{thm:idn}, our objective is to efficiently learn the causal-origin representation, which encapsulates the environment-shared union graph $\mathcal{M}_{\cup}$. To achieve this, we design a dual GAT structure comprising a core-GAT and a general-GAT. The core-GAT is specifically designed to learn a stable graph representation that aligns with the environment-shared union graph. For this purpose, we use TD error as a simple yet effective detector for significant changes in the environment's underlying graph structure.
This approach enables selective updates to the core-GAT, ensuring it responds only to significant environmental changes, thereby eliminating the need for explicit recognition of such changes as often required by existing work \citep{sutton2007role}. Additionally, it facilitates the approximate learning of the environment-shared union graph.

While the core-GAT focuses on learning the stable part of the graph representation, some edges may be overlooked or lost in the process. To compensate for this potential loss of information and to enhance the algorithm's adaptation capabilities, we introduce a continuously updating general-GAT. By integrating both the core-GAT and the general-GAT, we can construct the causal-origin representation to provide a comprehensive understanding of the environment's dynamics for the policy and mitigating the impact of non-stationarity in RL.

Specifically, we first transform states into node features using an MLP network $f_\mathrm{MLP}:\mathbb{R}^{d_s}\to\mathbb{R}^{N\cdot d_f}$ and reshape the output into node feature matrix ${\bm{X}} = \{{\bm{x}}_1, {\bm{x}}_2,\ldots,{\bm{x}}_N\} \in \mathbb{R}^{N\times d_f}$, where $N$ is the number of nodes, and $d_f$ is the number of features in each node. We then compute the weighted adjacency matrix which represents the probabilities of edges by using Softmax on the similarity matrix of nodes: 
\begin{equation}\label{eq:compute-adjacency}
    {\bm{A_X}}=\mathrm{Softmax}\left({\bm{X}}{\bm{X}}^{\mathrm{T}}\odot({\bm{1}}_N-{\bm{I}}_N)\right),
\end{equation}
where ${\bm{1}}_N\in\mathbb{R}^{N\times N}$ represents the matrix with all elements equal to 1, ${\bm{I}}_N\in\mathbb{R}^{N\times N}$ represents the identity matrix, and $\odot$ denotes the Hadamard product. Multiplying $({\bm{1}}_N-{\bm{I}}_N)$ is for removing the self-loop similarity when computing the weighted adjacency matrix. 

Then a learnable weight matrix ${\bm{W}} \in \mathbb{R}^{d_f \times d_g}$ is applied to the nodes for transforming $\bm{X}$ into graph features $\bm{XW}\in\mathbb{R}^{N\times d_g}$ where $d_g$ denotes the dimension of the graph feature. We then perform the self-attention mechanism on the nodes: 
\begin{equation}
\alpha_{ij} = \operatorname{attention}({\bm{x}}_i\bm{W},{\bm{x}}_j\bm{W}|{\bm{A_X}}).
\end{equation}
The conditioned ${\bm{A_X}}$ allows us to perform the masked attention, \ie, we only compute $\alpha_{ij}$ for node $j\in \mathbf{N}_i({\bm{A_X}})$ where $\mathbf{N}_i({\bm{A_X}})$ is the neighbor set of node $i$ computed by the weighted adjacency matrix ${\bm{A_X}}$. This helps us consider deeper-depth neighbors of each node by combining multiple $\mathrm{attention(\cdot)}$ into a multi-layer network. For the $n$-th graph attention layer, the coefficients computed by the self-attention mechanism can be specifically expressed as:
\begin{equation}\label{eq:gat_layer_coef}
    \alpha_{i j}=\frac{\delta_{\mathbf{N}_i({\bm{A_X}})}(j)\cdot \exp \left(\sigma\left({\bm{l}_n}\left[{\bm{x}_i\bm{W}} \oplus {\bm{x}_j\bm{W}}\right]^\mathrm{T}\right)\right)}{\sum_{k \in \mathbf{N}_i({\bm{A_X}})} \exp \left(\sigma\left({\bm{l}_n}\left[{\bm{x}_i\bm{W}} \oplus {\bm{x}_k\bm{W}}\right]^{\mathrm{T}}\right)\right)},
\end{equation}
where $\oplus$ is the concatenation operator, $\sigma$ is the activation function, ${\bm{l}_n}\in\mathbb{R}^{2d_g}$ is the learnable weight for the $n$-th graph attention layer, and $\delta_{\mathbf{N}_i({\bm{A_X}})}(j)$ is the indicator function, \ie, $\delta_{\mathbf{N}_i({\bm{A_X}})}(j)=1$ if $j\in\mathbf{N}_i({\bm{A_X}})$ otherwise $0$. 
Subsequently, the resulting features are concatenated to form the graph node: 
\begin{equation}
{\bm{g}}_i=\sigma\Big(\sum_{j \in \mathbf{N}_i({\bm{A_X}})} \alpha_{i j} {\bm{x}_j\bm{W}}\Big).
\end{equation}
The respective outputs of the core-GAT and the general-GAT, \ie, ${\bm{G}}_\text{core}\doteq \left\{\bm{g}_1, \bm{g}_2, \ldots, \bm{g}_N\right\}_\text{core}^\mathrm{T}$, ${\bm{G}}_\text{general}\doteq \left\{\bm{g}_1, \bm{g}_2, \ldots, \bm{g}_N\right\}_\text{general}^\mathrm{T}$, are concatenated to form the final causal-origin representation. Specifically, we denote the entire process of obtaining the causal-origin representation from ${\bm{s}}$ as a function $\bm{G}:\mathbb{R}^{d_s}\to\mathbb{R}^{N\times 2d_g}$, such that $\bm{G}({\bm{s}}) \doteq {\bm{G}}_\text{core}\oplus {\bm{G}}_\text{general}$.

We finally feed $\bm{G}(\bm{s})$ into a VAE inference process to derive the latent representation ${\bm{h}}$. More details about this process are discussed in Section \ref{sec:vae}. 
The latent ${\bm{h}}$ is then provided to the policy $\pi({\bm{a}}|{\bm{s}},{\bm{h}})$ for policy optimization. COREP does not restrict the choice of policy optimization algorithms. In our implementation, we choose the standard PPO algorithm \citep{schulman2017proximal} for policy optimization.

\subsection{Incorporation with VAE}\label{sec:vae}

To improve the efficiency of learning the causal-origin representation, we incorporate the causal-origin representation into the Variational AutoEncoder (VAE) framework \citep{kingma2013auto}.  Specifically, we feed the output $\bm{G}(\bm{s})$ into the VAE inference process to derive the mean and variance $(\mu_{\bm{h}}, \sigma_{\bm{h}})$ of the latent representation ${\bm{h}}$. Subsequently, we can sample ${\bm{h}} \sim \mathcal{N}(\mu_{\bm{h}}, \sigma_{\bm{h}}), {\bm{h}}\in\mathbb{R}^{d_h}$. The loss function for VAE is defined as
\begin{equation}\label{eq:l-vae}
\begin{split}
&\mathcal{L}_\text{VAE}({\bm{s}}; \theta,\phi) \\
&= \mathbb{E}_{q_{\phi}({\bm{h}}|\bm{G}(\bm{s}))}\left[\log p_{\theta}(\bm{G}(\bm{s})|{\bm{h}})\right] - \mathrm{KL}\left[q_{\phi}({\bm{h}}|\bm{G}(\bm{s})) || p({\bm{h}})\right]\\
&\approx \mathrm{MSE}({\bm{s}}, \hat{{\bm{s}}}) - \mathrm{KL}\left[q_{\phi}\left({\bm{h}}|\bm{G}(\bm{s})\right) || \mathcal{N}(0,{\bm{I}})\right],
\end{split}
\end{equation}
where $p_\theta,q_\phi$ represent the parameterized decoder and encoder respectively, $\mathrm{KL}(\cdot)$ denotes the Kullback-Leibler divergence, and $\mathrm{MSE}({\bm{s}}, \hat{{\bm{s}}})$ is an estimation of $\mathbb{E}_{q_{\phi}({\bm{h}}|\bm{G}(\bm{s}))}\left[\log p_{\theta}(\bm{G}(\bm{s})|{\bm{h}})\right]$ which measures the mean square error between the original state and the reconstructed state with the causal-origin representation. \textit{It is noteworthy that the VAE structure serves solely as a tool to enhance learning efficiency, therefore it is not a strictly necessary component of our method}. The latent ${\bm{h}}$ is then provided to the policy $\pi({\bm{a}}|{\bm{s}},{\bm{h}})$ for policy optimization.

\subsection{Guided Updating for Core-GAT}\label{sec:updating}

As discussed in Section \ref{sec:gat-structure}, to learn the causal-origin representation encapsulating the environment-shared union graph $\mathcal{M}_{\cup}$, we design a TD error-based detection mechanism to guide the update of core-GAT. 
Specifically, we store the TD errors of policy optimization into a buffer and compute the mean of recent TD errors, denoted as $\delta_\alpha$. Here, $\alpha$ controls the proportion of recent TD errors for detection.
We then check whether $\delta_\alpha$ lies within the confidence interval $(\mu_{\delta}-\eta\sigma_{\delta}, \mu_{\delta}+\eta\sigma_{\delta})$, where $\mu_\delta,\sigma_\delta$ are the mean and standard deviation of the TD buffer, and $\eta$ represents the confidence level. If the recent TD error $\delta_\alpha$ lies within this interval, we freeze the weights of the core-GAT and halt its updates; otherwise, we unfreeze its weights and proceed with updating the core-GAT.

We further introduce a regularization that penalizes the difference between the output adjacency matrices of the two GATs to guide the learning of the core-GAT:
\begin{equation}\label{eq:l-guide}
    \mathcal{L}_{\text {guide}} = \|{\bm{A}}_{\text{core}}-{\bm{A}}_{\text{general}}\|_2.
\end{equation}

\begin{figure*}[t]
    \centering
    \includegraphics[width=.98\linewidth]{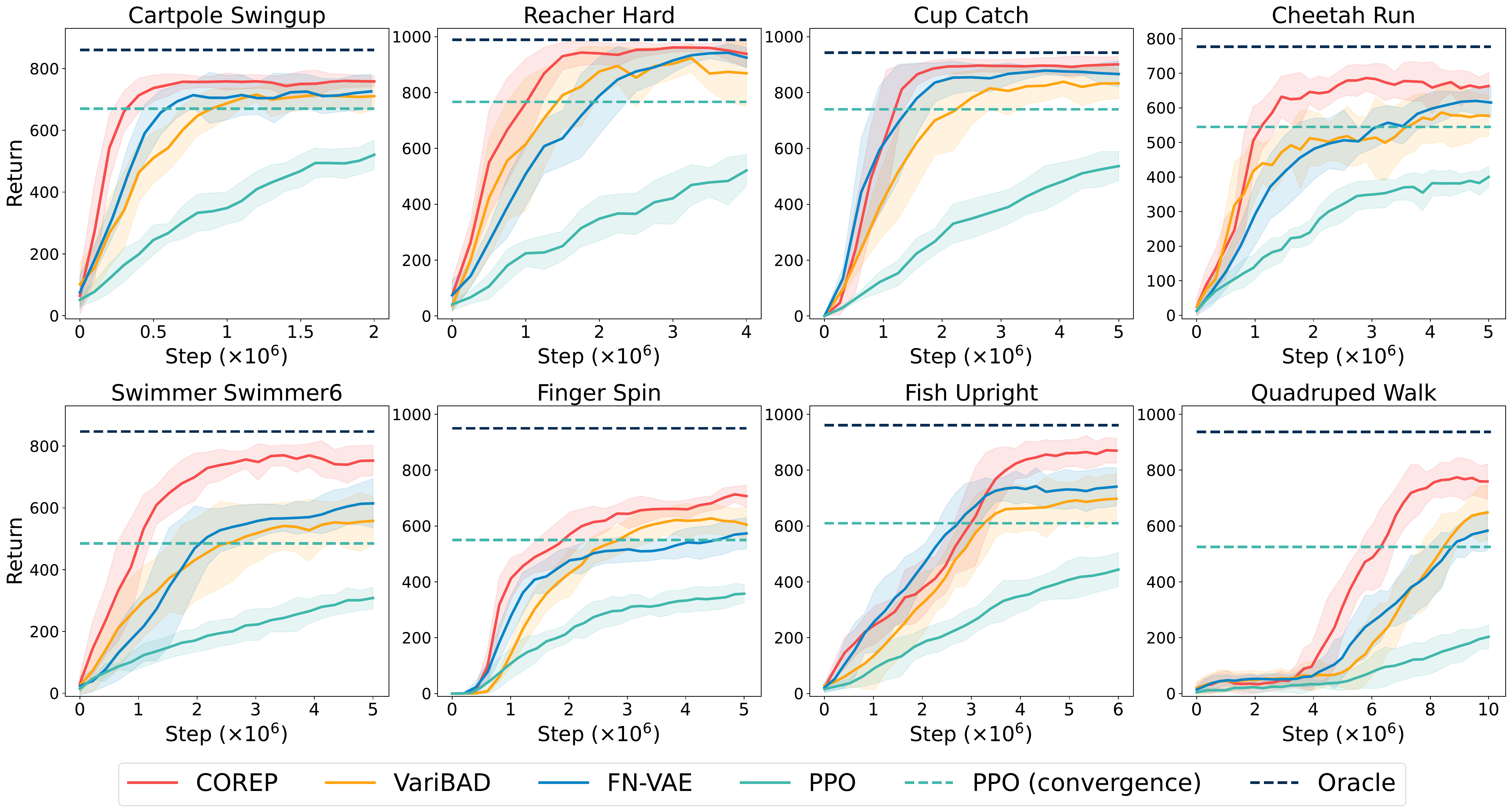}
    % \vspace{-2mm}
    \caption{Learning curves of our COREP algorithm and other baselines in different tasks. Solid curves indicate the mean of all trials with 5 different seeds. Shaded regions correspond to the standard deviation among trials. Dashed lines represent the asymptotic performance of PPO and Oracle.}
    \label{fig:multi-env-perf}
\end{figure*}

To enhance the identifiability, we introduce the regularization for the MAG structure and sparsity:
\begin{equation}\label{eq:l-sparsity-mag}
\begin{gathered}
\mathcal{L}_\text{MAG} = \|{\bm{A}}_{\text{core}} - {\bm{A}}_{\text{core}}^{\mathrm{T}}\|_2 + \|{\bm{A}}_{\text{general}} - {\bm{A}}_{\text{general}}^{\mathrm{T}}\|_2,\\
\mathcal{L}_{\text {sparsity}} = \|{\bm{A}}_{\text{core}}\|_1 + \|{\bm{A}}_{\text{general}}\|_1.
\end{gathered}
\end{equation}

The $\mathcal{L}_\text{MAG}$ can penalize asymmetry in the adjacency matrices of both the core and general graphs, ensuring alignment with the design of MAGs which utilize bidirected edges to characterize the change of marginal distribution of ${\bm{s}},  {\bm{h}}$ across different sub-environments. The $\mathcal{L}_{\text {sparsity}}$ can encourage a sparse representation, which is crucial for maintaining a manageable and interpretable graph structure. 

We finally compute the total loss function $\mathcal{L}_\text{total}$ as shown in Equation (\ref{eq:objective}), combining the policy optimization objective $\mathcal{L}_\text{policy}$ with the regularization loss functions:
\begin{equation}\label{eq:objective}
    \mathcal{L}_\text{total} =\mathcal{L}_\text{policy} + \lambda_1 \mathcal{L}_\text{guide} + \lambda_2(\mathcal{L}_\text{MAG} + \mathcal{L}_\text{sparsity} + \mathcal{L}_\text{VAE}),
\end{equation}
where $\mathcal{L}_\text{VAE}$ is calculated according to Equation (\ref{eq:l-vae}). It is important to note that, despite the presence of multiple regularization terms, we simplify the objective by aggregating these terms based on their magnitudes, setting only two hyperparameters. Empirical results also demonstrate that COREP does not rely on complex parameter tuning.

By computing and backpropagating the gradient of $\mathcal{L}_\text{total}$, along with the policy optimization in an end-to-end manner, COREP prevents the distinct loss functions from leading to irregular causal structures and ensures that the policy learned can effectively tackle non-stationarity problems in RL.  The detailed steps of COREP are outlined in Algorithm \ref{algo:COR}. Implementation details are shown in Appendix \ref{sec:implement-details}.

\section{Experiments}

In this section, our objective is to thoroughly evaluate the COREP algorithm by addressing three key questions:
(1) How effective is COREP in tackling non-stationarity? (2) What specific contribution does each component of COREP make to its overall performance? (3) Can COREP maintain consistent performance across various degrees and settings of non-stationarity? To answer these questions, we conduct various experiments and provide corresponding analyses. Due to the page limitation, only parts of these experiments are presented in the main manuscript. For complete results, further analysis, and implementation details, please refer to Appendix \ref{sec:implement-details} and \ref{sec:full-exp}. To ensure reproducibility, we include our code in the supplementary material.

\textbf{Baselines.} We compare COREP with the following baselines: FN-VAE \citep{feng2022factored}, VariBAD \citep{zintgraf2019varibad}, and PPO \citep{schulman2017proximal}. FN-VAE is the SOTA method for tackling non-stationarity, VariBAD is one of the SOTA algorithms in meta-RL that also has certain capabilities in handling non-stationarity, and PPO is a classical algorithm known for its strong stability. Furthermore, to examine the performance degradation caused by non-stationarity, we include an Oracle that has full access to non-stationarity information.

\textbf{Environment settings.} The experiments are conducted on various environments from the DeepMind Control Suite \citep{tassa2018deepmind}, which is a widely used benchmark for RL algorithms. We modify the environments to introduce non-stationarity, enabling a comprehensive evaluation of COREP. 
In our settings, similar to prior work FN-VAE, we introduce periodic noises to the environmental dynamics to represent non-stationarity. To support our claim that COREP can handle more complex non-stationarity, we design a more intricate setting, \ie, we randomly sample the coefficients for both within-episode and across-episode non-stationarity at every time step. Specifically, our modification can be expressed as:
\begin{equation}\label{eq:non-stat}
   s^\prime = f(s,a) + f(s,a)\cdot\alpha_d\left[c^t_1\cos(c_2^t\cdot t)+c_3^i\sin(c_4^i\cdot i)\right].
\end{equation}
Here, $\alpha_d$ controls the overall degree of non-stationarity, and $c_k^t, c_k^i\sim \mathcal{N}(0.5, 0.5)$ represent the changing coefficients of \textit{within-episode} and \textit{across-episode} non-stationarity, respectively. This design generates various combinations of non-stationarity for each time step and episode, posing more significant challenges to our COREP algorithm and the compared baselines.

\textbf{Performance.} As illustrated in Figure \ref{fig:multi-env-perf}, COREP outperforms all baselines across various environments, showcasing consistent performances in the face of non-stationarity. Notably, in complex environments such as \textit{Swimmer Swimmer6, Fish Upright, and Quadruped Walk}, COREP not only achieves higher performance but also exhibits smaller variances. This indicates its strong resilience and adaptability to intricate non-stationary environments. 

Comparatively, VariBAD demonstrates certain resistance to non-stationarity due to its adaptive capabilities. However, the large variances indicate a lack of stability in non-stationary settings. The FN-VAE method, which explicitly models the change factors, shows competitive performance in simpler environments but struggles to maintain consistency in more complex scenarios, underscoring its limitations in handling more challenging non-stationarity. Furthermore, the narrow performance gap between COREP and Oracle indicates the effectiveness of COREP in reducing the impact of non-stationarity on performance.

\textbf{Ablation study.} We conduct ablation studies to analyze the contribution of each component in COREP. To ensure consistency in our conclusion, the experiments are conducted under various non-stationarity settings: \textit{within-episode \& across-episode} (labeled as W+A-EP), \textit{within-episode} (W-EP), and \textit{across-episode} (A-EP) non-stationarities. 

As depicted in Figure \ref{fig:ablation}, removing all COREP-specific designs and retaining only the VAE process (the `w/o COREP' variant) results in substantial decreases in performance. This highlights the overall effectiveness of our designs. 

In the single GAT variant, which maintains the same network structure without introducing a secondary GAT and corresponding update mechanisms, we observe a significant performance gap compared to the full COREP. This result indicates that merely incorporating a graph representation is insufficient for capturing non-stationarity information effectively. 

The removal of the TD error detection mechanism (the `w/o $\mathcal{L}_\text{guide}$' variant) lead to considerable performance drops, further substantiating the importance of the guided update mechanism in COREP. 

These results demonstrate the effectiveness and necessity of the two key designs in our method (\ie, the dual GAT structure and the guided update mechanism) in tackling non-stationarity. Additionally, the results also show that the regularization terms $\mathcal{L}_\text{MAG}$ and $\mathcal{L}_\text{sparse}$ play important roles in enhancing COREP’s ability to handle non-stationarity. Furthermore, the inclusion of the VAE process in COREP is found to be effective in improving the performance as expected.

\begin{figure}[t]
\centering
\subfigure[Ablation study.]{
\label{fig:ablation}
{\includegraphics[width=\linewidth]{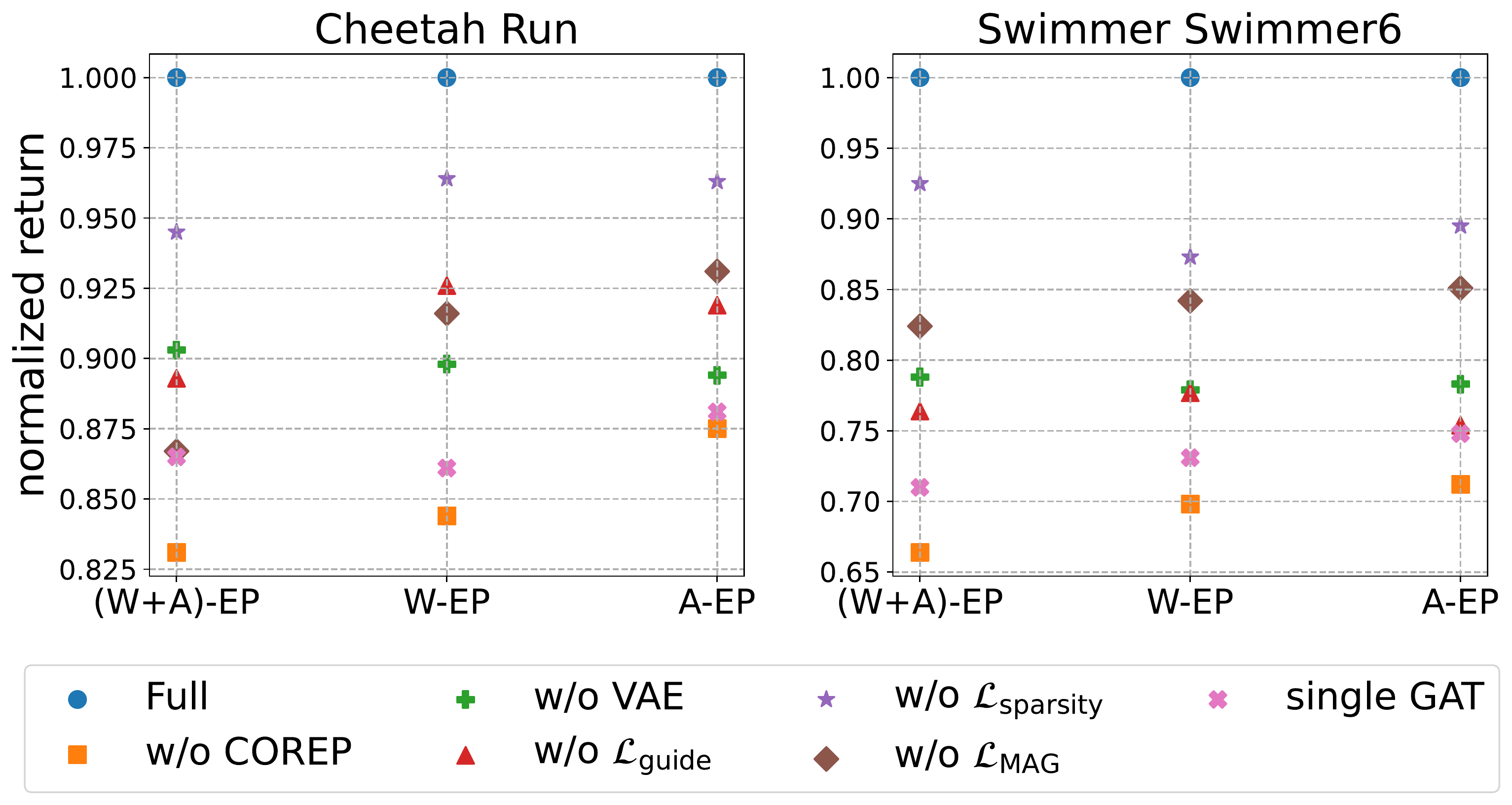}}
}
\hfill
\subfigure[Different degrees of non-stationarity.]{
\label{fig:non-stat-degree}
{\includegraphics[width=\linewidth]{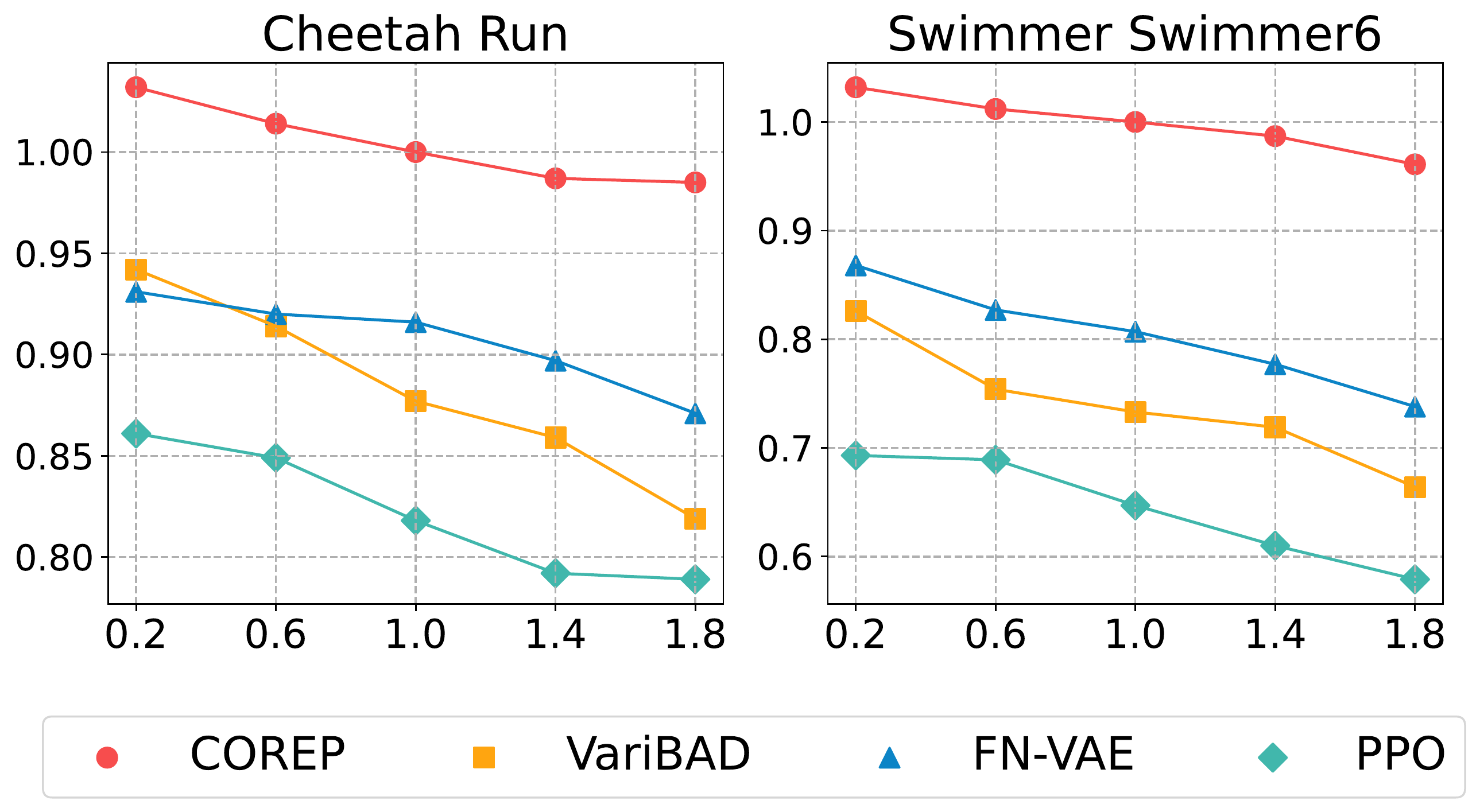}}
}
\caption{Mean returns of 3 different trials with: (a) different components and non-stationarity settings. Returns are normalized to the full version of COREP in each environment; (b) different degrees of non-stationarity. Returns are normalized to the COREP algorithm with standard degree $1.0$.}
\label{fig:ablation_and_degree}
\end{figure}

\textbf{Different degrees of non-stationarity.} 

We further investigate the impact of varying degrees of non-stationarity in the environment, as depicted in Figure \ref{fig:non-stat-degree}. The results suggest that the compared baselines are more affected by the degree of non-stationarity. Contrastively, COREP exhibits consistent performance when encountering different degrees of non-stationarity, further demonstrating our claim that COREP can effectively tackle more complex non-stationarity.

\textbf{Visualization.} To visualize the graph structure learned by COREP, we respectively show the weighted adjacency matrix of core-GAT and general-GAT in \textit{Cheetah Run} and \textit{Walker Walk}. Please refer to Section \ref{sec:vis} for more details. It can be seen that core-GAT focuses more on a few core nodes in its learned graph structure, while general-GAT compensates for some overlooked detailed information by core-GAT. The results align well with our claim made in Section \ref{sec:idea}.

\textbf{Hyperparameter study.} We conduct additional experiments to study COREP's sensitivity to the hyperparameters $\lambda_1, \lambda_2$ in the objective function (\ref{eq:objective}). Results are shown in Appendix \ref{sec:tuning}, indicating that the performance of COREP is not sensitive to the hyperparameters. Even with extensive adjustments to $\lambda_1, \lambda_2$, COREP consistently surpasses the SOTA baseline. Notably, we observe instances in certain environments with higher performance than default. This suggests that with more precise tuning, there is potential to further enhance COREP's effectiveness. These findings suggest that COREP's superior performance is largely attributed to its designs rather than relying on hyperparameter tuning.

\section{Related Work}

\textbf{Non-stationary RL.} Pioneering research in non-stationary RL primarily focused on directly detecting changes that had already occurred \citep{da2006dealing}, rather than anticipating them. Various methods have been developed to anticipate changes in non-stationary RL settings. For example, Prognosticator \citep{chandak2020optimizing} tried to maximize future rewards without explicitly modeling non-stationary environments, while MBCD \citep{alegre2021minimum} employed change-point detection to determine whether an agent should learn a new policy or reuse existing ones. However, these methods with change-point detection may not work well in complex non-stationary environments and often requires providing priors. 

In cases where the evolution of non-stationary environments can be represented as a Semi-Markov chain, Hidden Markov-MDPs or Hierarchical Semi-Markov Decision Processes can be employed to address non-stationarity problems \citep{choi1999environment, hadoux2014solving}. Some later work attempts to resist non-stationarity by leveraging the generalization of meta-learning \citep{finn2017model}. For example, Adaption via Meta-learning \citep{al2017continuous} integrated continuous adaptation into the learning-to-learn framework to solve the non-stationarity problems. TRIO \citep{poiani2021meta} tracked non-stationarity by inferring the evolution of latent parameters, capturing the temporal change factors during the meta-testing phase. GrBAL \citep{nagabandi2018learning} meta-trained dynamic priors, enabling efficient adaptation to local contexts.

However, these methods require the pre-definition of non-stationary tasks and subsequent meta-training on them. In real-world scenarios, though, we cannot access such information about the non-stationarity. An alternative line of research directly tries to learn latent representations to capture non-stationary components, leveraging latent variable models to directly model change factors in environments or estimating latent vectors describing the non-stationary aspects of dynamics. 

Specifically, LILAC \citep{xie2020deep} regarded the change factor as a latent variable and explicitly modeled the latent MDP. FN-VAE \citep{feng2022factored} modeled multiple latent variables of non-stationarities to achieve better performance. However, in real-world scenarios, non-stationarity itself is often more complex. Simply modeling the latent dynamics may not solve such complex scenarios well. Interpreting nonstationarity from a causal perspective is another novel approach. 

In addition, some work learns controllers on a collection of pre-defined stationary environments \citep{provan2022towards, deng2022towards, zhang2019stable}, which can get a guaranteed controller for any mixture of these stationary environments, thereby improving performance in more complex environments. Although our method theoretically treats non-stationary environments as mixtures of sub-environments in a similar way, we use a more elegant update mechanism in practical design to ensure that our method is applicable to continuously changing environments. 

Some other research \citep{saengkyongam2023invariant} seeks to identify an invariant causal structure to mitigate the impact of non-stationarity, presenting similarities to our approach. However, their methodology relies on offline data and has been tested solely in simple contextual Bandits environments. In contrast, our COREP algorithm is capable of online learning and addresses non-stationarity in more complex environments.

\textbf{Causal structure learning.}
Various approaches for learning causal structure from observed data have been proposed, see \citep{vowels2022d} for a review. These approaches mainly fall into two broad categories: constraint-based methods and score-based methods. The constraint-based methods check the existence of edges by performing conditional independence tests between each pair of variables, \eg PC \citep{spirtes2000causation}, IC \citep{pearl2000Causality}, and FCI \citep{spirtes1995causal,zhang2008completeness}. In contrast, score-based methods generally view causal structure learning as a combinatorial optimization problem, and measure the goodness of fit of graphs over the data with a score, then optimize such score to find an optimal graph or equivalent classes  \citep{chickering2002optimal, koivisto2004exact,silander2006simple,cussens2017polyhedral, huang2018generalized}.  

Recently, some gradient-based methods that transform the discrete search into a continuous optimization by relaxing the space over DAGs have been proposed. These methods allow for applying continuous optimizations such as gradient descent to causal structure learning. For example, NOTEARS \citep{zheng2018dags} reformulated the structure learning problem as a continuous optimization problem, and ensured acyclicity with a weighted adjacency matrix. DAG-GNN \citep{yu2019dag} proposed a generative model parameterized by a GNN and applied a variant of the structural constraint to learn the DAG. Some researchers \citep{saeed2020causal} considered the distribution arising from a mixture of causal DAGs, used MAGs to represent DAGs with unobserved nodes, and showed the identifiability of the union of component MAGs.

\section{Conclusions, Limitations and Future Work}

In this work, we first offer a novel interpretation of non-stationarity in RL, characterizing it through the union of MAGs. This new perspective has inspired us to design the COREP algorithm, which features a dual GAT structure and an update mechanism guided by TD-error detection. Focusing on the causal relationships whthin the dynamics, COREP learns a causal-origin representation that remains stable amidst changes in the environment, effectively addressing non-stationarity problems in RL. Our theoretical analysis offers both inspiration and foundational support for COREP. Furthermore, experimental results from various non-stationary environments demonstrate the efficacy of our algorithm.

However, the COREP algorithm does face certain limitations, particularly scalability issues in high-dimensional state spaces due to the computationally intensive nature of its graph-based representation. In our future work, we aim to overcome these challenges by integrating the causal-origin representation with other types of latent variable models, such as normalizing flows and probabilistic graphical models. This is expected to enhance both the scalability and the performance, making COREP more applicable in real-world scenarios.

\newpage

% Acknowledgements should only appear in the accepted version.
\section*{Acknowledgements}

This work was supported in part by NSF China under grant 62250068. The authors would like to thank the anonymous
reviewers for their valuable feedback and suggestions.

\section*{Impact Statement}

This research in machine learning, especially through the COREP algorithm, offers potential benefits for enhancing the decision-making abilities of agents in complex and unpredictable environments, such as robot control and autonomous driving. However, it is crucial to be aware of ethical implications, including data privacy, algorithmic bias, and the security of these systems. Future research should focus on developing transparent and safe algorithms for real-world scenarios.

% In the unusual situation where you want a paper to appear in the
% references without citing it in the main text, use \nocite
% \nocite{xxx}

\bibliography{paper}
\bibliographystyle{icml2024}

%%%%%%%%%%%%%%%%%%%%%%%%%%%%%%%%%%%%%%%%%%%%%%%%%%%%%%%%%%%%%%%%%%%%%%%%%%%%%%%
%%%%%%%%%%%%%%%%%%%%%%%%%%%%%%%%%%%%%%%%%%%%%%%%%%%%%%%%%%%%%%%%%%%%%%%%%%%%%%%
% APPENDIX
%%%%%%%%%%%%%%%%%%%%%%%%%%%%%%%%%%%%%%%%%%%%%%%%%%%%%%%%%%%%%%%%%%%%%%%%%%%%%%%
%%%%%%%%%%%%%%%%%%%%%%%%%%%%%%%%%%%%%%%%%%%%%%%%%%%%%%%%%%%%%%%%%%%%%%%%%%%%%%%
\newpage
\appendix
\onecolumn
\section{Causality Background and Proofs}\label{sec:full-theory}

We first review the definition of the Markov condition, the faithfulness assumption and some graphical concepts shown in the condition of Theorem \ref{thm:idn}.
We use $\mathrm{pa}_{\mathcal{D}}(v)$, $\mathrm{ch}_{\mathcal{D}}(v)$, and $\mathrm{an}_{\mathcal{D}}(v)$ to denote the parents, children and ancestor of node $v$, respectively; for the detailed definitions, see \eg \citep{lauritzen1996graphical}.

\begin{definition}[Global Markov Condition \citep{pearl2000Causality}]\label{assumption:markov}
A distribution $P$ over $V$ satisfies the global Markov condition on graph $\mathcal{D}$ if for any partition $(X, Y, Z)$ such that $X$ is d-separated from $Y$ given $Z$, then $X$ and $Y$ are conditionally independent given $Z$. 
\end{definition}

\begin{definition}[Faithfulness \citep{pearl2000Causality}]\label{assumption:faithful}
There are no independencies between variables that are not entailed by the Markov Condition.
\end{definition} 

Under the above assumptions, we can tell the conditional independences using the d-separation criterion from a given DAG $\mathcal{D}$ \citep{pearl2000Causality}. Similarly, the MAGs are ancestral graphs where any non-adjacent pair of nodes is d-separated \citep{richardson2002ancestral}.
The following algorithm shows how to construct a MAG from DAG \citep{saeed2020causal}:

\begin{algorithm}
\caption{Construction of the maximal ancestral graph}\label{algo:MAG}
\begin{algorithmic}[1]
\STATE Input: DAG ${\mathcal{D}} = (V, E)$
\STATE Initialize $D=\emptyset, B=\emptyset$
\FOR{$u,v\in\mathrm{ch}_{\mathcal{D}}(y)$} 
\STATE add $u\leftrightarrow v$ to $B$.
\ENDFOR
\FOR{$t,u,v$ such that $(t\rightarrow u)\in E$ and $(u\leftrightarrow v)\in B$} 
\IF{$u\in \mathrm{an}_{\mathcal{D}}(v)$}
    \STATE add $t\rightarrow v$ to $D$
\ENDIF
\ENDFOR
\FOR{$u,v$ such that $(u\leftrightarrow v)\in B$} 
\IF{$u \in \mathrm{an}_{\mathcal{D}}(v)$}
    \STATE remove $u\leftrightarrow v$ from $B$ and add $u\rightarrow v$ to $D$
\ENDIF
\ENDFOR
\end{algorithmic}
\end{algorithm}

To illustrate the above algorithm, we provide two figures. Figure \ref{fig:G} shows the underlying causal DAGs for the two environments, and Figure \ref{fig:M} depicts the output of Algorithm \ref{algo:MAG} as well as the corresponding environment-shared union graph.

\begin{figure}[ht]
\centering
\resizebox{.5\linewidth}{!}{
\subfigure[DAG $\mathcal{D}_{(1)}$.]{
\begin{tikzpicture}[scale=0.9,
->,
shorten >=2pt]
\node[circle,
minimum width = 32pt, draw=yellow, fill=yellow!20] (0) at(-2,2){$e$};
\node[circle,
minimum width = 32pt, draw=purple, fill=purple!20] (1) at(0,6){$a$};
\node[circle,
minimum width = 32pt, draw=orange, fill=orange!20] (2) at(0,4){$s$};
\node[circle,
minimum width =32pt ,draw=blue, fill=blue!15] (3) at(0,2){$h_{1}$};
\node[circle,
minimum width =32pt ,draw=blue, fill=blue!15] (4) at(0,0){$h_{2}$};
\node[circle,
minimum width =32pt,draw=purple, fill=purple!20] (5) at(2,6){$a^{\prime}$};
\node[circle,
minimum width =32pt ,draw=orange, fill=orange!20] (6) at(2,4){$s^{\prime}$};
\node[circle,
minimum width =32pt ,draw=blue, fill=blue!15] (7) at(2,2){$h_{1}^{\prime}$};
\node[circle,
minimum width =32pt ,draw=blue, fill=blue!15] (8) at(2,0){$h_{2}^{\prime}$};
\node[circle,
minimum width =32pt ,draw=green, fill=green!15] (9) at(2,-2){$r^{\prime}$};
\draw[->] (0) --(2);
\draw[->] (0) --(3);
\draw[->] (0) --(4);
\draw[->] (1) --(6);
\draw[->] (1) --(7);
\draw[->] (2) --(6);
\draw[->] (2) --(8);
\draw[->] (3) --(8);
\draw[->] (4) --(8);
\draw[->] (5) to [out=300,in=60] (9);
\draw[->] (6) to [out=300,in=60] (9);
\draw[->] (7) to [out=300,in=60] (9);
\end{tikzpicture}
}
\subfigure[DAG $\mathcal{D}_{(2)}$.]{
\centering
\begin{tikzpicture}[scale=0.9,
->,
shorten >=2pt]
\node[circle,
minimum width = 32pt, draw=yellow, fill=yellow!20] (0) at(-2,2){$e$};
\node[circle,
minimum width = 32pt, draw=purple, fill=purple!20] (1) at(0,6){$a$};
\node[circle,
minimum width = 32pt, draw=orange, fill=orange!20] (2) at(0,4){$s$};
\node[circle,
minimum width =32pt ,draw=blue, fill=blue!15] (3) at(0,2){$h_{1}$};
\node[circle,
minimum width =32pt ,draw=blue, fill=blue!15] (4) at(0,0){$h_{2}$};
\node[circle,
minimum width =32pt,draw=purple, fill=purple!20] (5) at(2,6){$a^{\prime}$};
\node[circle,
minimum width =32pt ,draw=orange, fill=orange!20] (6) at(2,4){$s^{\prime}$};
\node[circle,
minimum width =32pt ,draw=blue, fill=blue!15] (7) at(2,2){$h_{1}^{\prime}$};
\node[circle,
minimum width =32pt ,draw=blue, fill=blue!15] (8) at(2,0){$h_{2}^{\prime}$};
\node[circle,
minimum width =32pt ,draw=green, fill=green!15] (9) at(2,-2){$r^{\prime}$};
\draw[->] (0) --(2);
\draw[->] (0) --(3);
\draw[->] (0) --(4);
\draw[->] (1) --(6);
\draw[->] (1) --(7);
\draw[->] (2) --(6);
\draw[->] (2) --(7);
\draw[->] (3) --(7);
\draw[->] (3) --(8);
\draw[->] (4) --(8);
\draw[->] (5) to [out=300,in=60] (9);
\draw[->] (6) to [out=300,in=60] (9);
\draw[->] (8) -- (9);
\end{tikzpicture}
}
}
\caption{DAG representations for two sub-environments.}
\label{fig:G}
\end{figure}
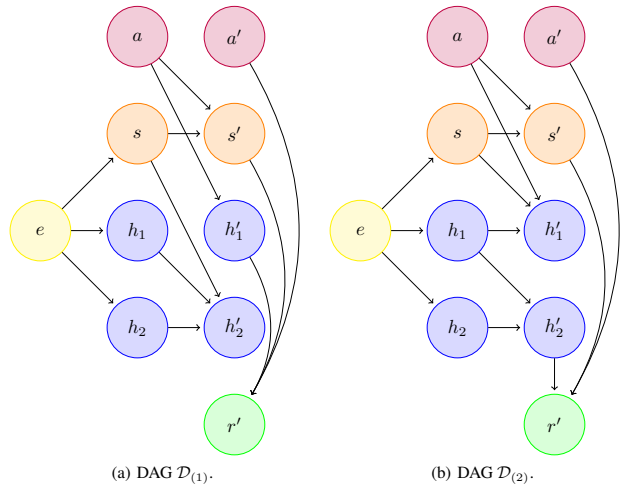

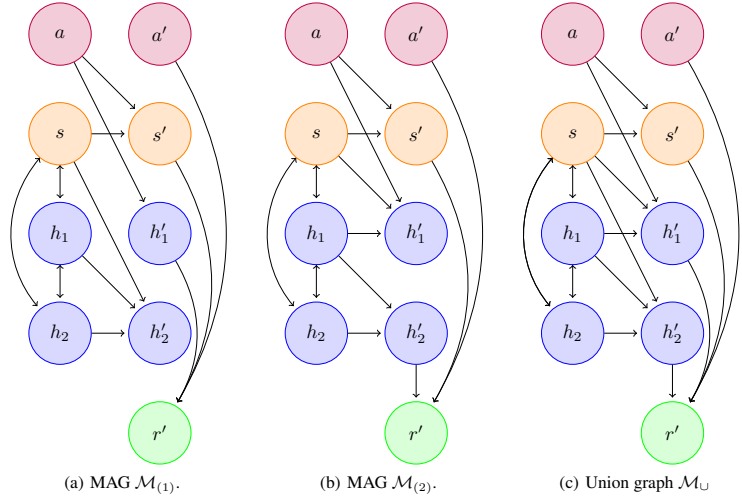
\begin{figure}[ht]
\centering
\resizebox{.6\linewidth}{!}{
\subfigure[MAG $\mathcal{M}_{(1)}$.]{
\begin{tikzpicture}[scale=0.9,
->,
shorten >=2pt]
\node[circle,
minimum width = 32pt, draw=purple, fill=purple!20] (1) at(0,6){$a$};
\node[circle,
minimum width = 32pt, draw=orange, fill=orange!20] (2) at(0,4){$s$};
\node[circle,
minimum width =32pt ,draw=blue, fill=blue!15] (3) at(0,2){$h_{1}$};
\node[circle,
minimum width =32pt ,draw=blue, fill=blue!15] (4) at(0,0){$h_{2}$};
\node[circle,
minimum width =32pt,draw=purple, fill=purple!20] (5) at(2,6){$a^{\prime}$};
\node[circle,
minimum width =32pt ,draw=orange, fill=orange!20] (6) at(2,4){$s^{\prime}$};
\node[circle,
minimum width =32pt ,draw=blue, fill=blue!15] (7) at(2,2){$h_{1}^{\prime}$};
\node[circle,
minimum width =32pt ,draw=blue, fill=blue!15] (8) at(2,0){$h_{2}^{\prime}$};
\node[circle,
minimum width =32pt ,draw=green, fill=green!15] (9) at(2,-2){$r^{\prime}$};
\draw[<->] (2) --(3);
\draw[<->] (3) --(4);
\draw[<->] (2) to [out=230,in=130] (4);
\draw[->] (1) --(6);
\draw[->] (1) --(7);
\draw[->] (2) --(6);
\draw[->] (2) --(8);
\draw[->] (3) --(8);
\draw[->] (4) --(8);
\draw[->] (5) to [out=300,in=60] (9);
\draw[->] (6) to [out=300,in=60] (9);
\draw[->] (7) to [out=300,in=60] (9);
\end{tikzpicture}
}
\subfigure[MAG $\mathcal{M}_{(2)}$.]{
\centering
\begin{tikzpicture}[scale=0.9,
->,
shorten >=2pt]
\node[circle,
minimum width = 32pt, draw=purple, fill=purple!20] (1) at(0,6){$a$};
\node[circle,
minimum width = 32pt, draw=orange, fill=orange!20] (2) at(0,4){$s$};
\node[circle,
minimum width =32pt ,draw=blue, fill=blue!15] (3) at(0,2){$h_{1}$};
\node[circle,
minimum width =32pt ,draw=blue, fill=blue!15] (4) at(0,0){$h_{2}$};
\node[circle,
minimum width =32pt,draw=purple, fill=purple!20] (5) at(2,6){$a^{\prime}$};
\node[circle,
minimum width =32pt ,draw=orange, fill=orange!20] (6) at(2,4){$s^{\prime}$};
\node[circle,
minimum width =32pt ,draw=blue, fill=blue!15] (7) at(2,2){$h_{1}^{\prime}$};
\node[circle,
minimum width =32pt ,draw=blue, fill=blue!15] (8) at(2,0){$h_{2}^{\prime}$};
\node[circle,
minimum width =32pt ,draw=green, fill=green!15] (9) at(2,-2){$r^{\prime}$};
\draw[<->] (2) --(3);
\draw[<->] (3) --(4);
\draw[<->] (2) to [out=230,in=130] (4);
\draw[->] (1) --(6);
\draw[->] (1) --(7);
\draw[->] (2) --(6);
\draw[->] (2) --(7);
\draw[->] (3) --(7);
\draw[->] (3) --(8);
\draw[->] (4) --(8);
\draw[->] (5) to [out=300,in=60] (9);
\draw[->] (6) to [out=300,in=60] (9);
\draw[->] (8) -- (9);
\end{tikzpicture}
}
\subfigure[Union graph $\mathcal{M}_{\cup}$]{
\begin{tikzpicture}[scale=0.9,
->,
shorten >=2pt]
\node[circle,
minimum width = 32pt, draw=purple, fill=purple!20] (1) at(0,6){$a$};
\node[circle,
minimum width = 32pt, draw=orange, fill=orange!20] (2) at(0,4){$s$};
\node[circle,
minimum width =32pt ,draw=blue, fill=blue!15] (3) at(0,2){$h_{1}$};
\node[circle,
minimum width =32pt ,draw=blue, fill=blue!15] (4) at(0,0){$h_{2}$};
\node[circle,
minimum width =32pt,draw=purple, fill=purple!20] (5) at(2,6){$a^{\prime}$};
\node[circle,
minimum width =32pt ,draw=orange, fill=orange!20] (6) at(2,4){$s^{\prime}$};
\node[circle,
minimum width =32pt ,draw=blue, fill=blue!15] (7) at(2,2){$h_{1}^{\prime}$};
\node[circle,
minimum width =32pt ,draw=blue, fill=blue!15] (8) at(2,0){$h_{2}^{\prime}$};
\node[circle,
minimum width =32pt ,draw=green, fill=green!15] (9) at(2,-2){$r^{\prime}$};
\draw[<->] (2) --(3);
\draw[<->] (3) --(4);
\draw[<->] (2) to [out=230,in=130] (4);
\draw[->] (1) --(6);
\draw[->] (1) --(7);
\draw[->] (2) --(6);
\draw[->] (2) --(7);
\draw[->] (3) --(7);
\draw[->] (3) --(8);
\draw[->] (4) --(8);
\draw[->] (5) to [out=300,in=60] (9);
\draw[->] (6) to [out=300,in=60] (9);
\draw[->] (8) -- (9);
\draw[<->] (2) to [out=230,in=130] (4);
\draw[->] (2) --(8);
\draw[->] (7) to [out=300,in=60] (9);
\end{tikzpicture}
}
}
\caption{MAG representations for two sub-environments and their union graph.}
\label{fig:M}
\end{figure}

In this paper, we characterize the nonstationarity as a mixture of stationary distributions. Formally, we take the following definition. 

\begin{definition}[Mixture of stationary distributions]
The marginal distribution of $\{{\bm{s}}, {\bm{h}}, {\bm{a}}, {\bm{s}}^{\prime}, {\bm{h}}^{\prime}, {\bm{a}}^{\prime}\}$ is a mixture of stationary distributions across environments, i.e.,
\[
P({\bm{s}}, {\bm{h}}, {\bm{a}}, {\bm{s}}^{\prime}, {\bm{h}}^{\prime}, {\bm{a}}^{\prime}) = \sum_{k} \pi_k P({\bm{s}}, {\bm{h}}, {\bm{a}}, {\bm{s}}^{\prime}, {\bm{h}}^{\prime}, {\bm{a}}^{\prime}\mid e=k),
\]
where $\pi_k$ denotes the probability that the sample is from the $k$-th environment varying over time, and $P({\bm{s}}, {\bm{h}}, {\bm{a}}, {\bm{s}}^{\prime}, {\bm{h}}^{\prime}, {\bm{a}}^{\prime}\mid e=k)$ is invariant over time.  
\end{definition}

\begin{proof}[Proof of Proposition \ref{thm:idn}]
The outline of the proof are as follows. We first construct a strict partial order $\pi$ on $V$. Then, we induce the MAGs $\mathcal{M}_{(1)}, \dots, \mathcal{M}_{(k)}$ from the DAGs $\mathcal{D}_{(1)}, \dots, \mathcal{D}_{(k)}$ by applying the rules defined in Algorithm \ref{algo:MAG}. We show the constructed partial order $\pi$ is \textit{compatible}, that $\forall k$, it holds that (a) $u\in \operatorname{an}(v) \Rightarrow u<_\pi v \text{ in } \mathcal{M}^{(k)}$; and (b) $ u \leftrightarrow v \Rightarrow u \not\lessgtr_\pi v  \text{ in } \mathcal{M}^{(k)}$. Finally we leverage the existing results  in \citep{saeed2020causal} to conclude that $\mathcal{M}_{\cup}$ is a MAG.

We define a relation $\pi$ on $V$ as following: for any variable $u\in\{{\bm{s}}, {\bm{h}}, {\bm{a}}\}$ and any variable $v\in\{{\bm{s}}^{\prime}, {\bm{h}}^{\prime}, {\bm{a}}^{\prime}\}$,  we have (i) $u <_{\pi} v$; (ii) $v<_{\pi} r^{\prime}$. To show the above defined $\pi$ is a strict partial order, we first notice that $\pi$ is irreflexive, because $u\not<_{\pi}u, v\not<_{\pi}v$ and $r^{\prime}\not<_{\pi}r^{\prime}$. The transitivity and asymmetry also hold by definition of $\pi$. Therefore, $\pi$ is a partial order on $V$.

The Algorithm \ref{algo:MAG} constructs an MAG from DAG with three steps. The first step is to add bidirected edges among the nodes in $\operatorname{ch}(e)$. Different values of $e$ leads different marginal distribution of ${\bm{s}}, {\bm{h}}$, hence $\operatorname{ch}(y)\subseteq \{{\bm{s}}, {\bm{h}}\}$. Therefore, the bidirected edges are added with both nodes belonging to $\{{\bm{s}}, {\bm{h}}\}$.  For the second step, there is no such node $t$, with $(t\rightarrow u)\in E$ and $(u\leftrightarrow v)\in B$, because the nodes in $\{{\bm{s}}, {\bm{h}}\}$ have no ancestor other than itself. So the second step adds the directed edges when $u=v$. The third step in our case is redundant.  Equation (\ref{eq:transition}) shows that there is no instantaneous causal effects in the system, so there is no $u,v$ such that $(u\leftrightarrow v)\in B$ while $u \in \mathrm{an}_{\mathcal{D}}(v)$. From all above, if the input of Algorithm \ref{algo:MAG} is $\mathcal{D}_{(k)}$, then it outputs a MAG $\mathcal{M}_{(k)} = (V, D_{(k)}, B_{(k)})$ with the set of nodes $V$, the set directed edges $D$ equals to the set of directed edges $E_{(k)}$ after removing the node $e$, and the set of bidirected edges $B_{(k)}$ consists edges among nodes in $\{{\bm{s}}, {\bm{h}}\}$. 

Then, we check the condition (a) and (b) to show $\mathcal{M}_{(1)}, \dots, \mathcal{M}_{(k)}$ are compatible with the above defined $\pi$. For (a), if $u$ is the ancestor node of $v$, then the structure of $\mathcal{M}_{(k)}$ implies that either $u\in \{{\bm{s}}, {\bm{h}}\}$ and $v\in \{{\bm{s}}^{\prime}, {\bm{h}}^{\prime}\}$ are nodes in $\{{\bm{s}}^{\prime}, {\bm{h}}^{\prime}\}$, or $u$ is a node from $\{{\bm{s}^{\prime}}, {\bm{h}^{\prime}}\}$ and $v=r^{\prime}$. For (b), if $u\leftrightarrow v$, then $u,v$ are nodes in $\{{\bm{s}}, {\bm{h}}\}$, hence $u \not\lessgtr_\pi v  \text{ in } \mathcal{M}^{(k)}$. These means that $\pi$ is a common strict partial order on $V$ for all MAGs. In this setup, we can leverage existing results from Lemma 4.3 \citep{saeed2020causal} to show that the environment-shared union graph $\mathcal{M}_{\cup}$ is also a maximal ancestral graph.

\end{proof}

\section{Toy Example under the Causal Interpretation}\label{sec:toy-example}

To better understand the causal interpretation for non-stationary RL, let's consider a simple toy example. Given a stationary environment with a state space represented as $(s_1, s_2)$. In this example, to maintain simplicity, we focus only on the state's mask, omitting the action mask and noise term. We define the original dynamics function as $f(\bm{c}_s\odot \bm{s}, a)=\bm{c}_s\odot \bm{s} + a$. For the toy environment, we consider a basic causal model wherein $s_i^\prime$ is only influenced by $s_i$. 

Consequently, the original mask is
\begin{equation}
\bm{c}_s=
\begin{pmatrix}
1 & 0\\
0 & 1
\end{pmatrix}
\end{equation}

Given this, we can derive that 
\begin{equation}
\begin{gathered}
    s_1^\prime=s_1+a\\
    s_2^\prime=s_2+a.
\end{gathered}
\end{equation}
We denote the non-stationarity in our experiments (Eq \ref{eq:non-stat}) simplistically as $\bm{s}^\prime=f(\bm{s},a)[1+n(t)]$, where $n(t)$ represents the introduced non-stationarity, which makes the dynamics becoming time-varying. In this scenario, the non-stationary environment's dynamics function becomes 
\begin{equation}
\begin{gathered}
s_1^\prime=s_1+a+(s_1+a)n(t)\\
s_2^\prime=s_2+a+(s_2+a)n(t).
\end{gathered}
\end{equation}
It is obvious that the dynamics introduces a time-varying term. In this context, we can define $h_i\doteq (s_i+a)n(t)$, leading to $s_i^\prime=s_i+h_i+a$. This allows us to deduce the dynamics of $\bm{h}$ as 
\begin{equation}
\begin{aligned}
h_i^\prime&=(s_i^\prime+a)\cdot n(t+1)\\
&=(s_i+h_i+2a)\cdot n(t+1)\\
&\doteq g_i(\bm{c}_s^t\odot \bm{s},\bm{c}_h^t\odot \bm{h},a),
\end{aligned}
\end{equation}
where $\bm{c}_s^t,\bm{c}_h^t$ symbolize the time-varying masks caused by non-stationarity $n(t)$. 

More specifically, we derive
\begin{equation}
\bm{c}_s^t=\bm{c}_h^t=
\begin{pmatrix}
n(t+1) & 0\\
0 & n(t+1)
\end{pmatrix}
\end{equation}

The masks represent the causal effects between variables after non-stationary changes occur in this example. The values represent the degree of causal effects, which can be treated as edge weights after normalization. Based on our theory, COREP's goal is to learn the union graph shared by these graph structures and edge weights during the change process. That is, through a common graph with edge weights, it includes all possible non-stationary changes. Therefore, the information of non-stationarity in the final learned graph is reflected both in the topology of the union graph and in the edge weights representing probabilities.

By introducing the time-varying masks and $\bm{h}$, we can make the dynamics function remains stationary, transferring non-stationarity to the causal model. Thus, we have provided a walk-through under the simple toy example. 

In fact, as illustrated in Figures \ref{fig:G} and \ref{fig:M}, there are more intricate causal relationships in complex environments. As depicted above, $\bm{h}$ can encapsulate not only the inherent environmental information but also the complex causal relationships with non-stationarity. Our proposed union MAG (Proposition \ref{thm:idn}) and the correspondingly designed dual-GAT architecture aim to learn such intricate causal models, enabling generic RL algorithms to handle non-stationarity under this causal representation.

\newpage

\section{Implementation and Training Details}\label{sec:implement-details}

\subsection{Pseudo code for COREP}

In Algorithm \ref{algo:COR}, we summarize the steps of COREP. For more specific details, please refer to the code provided in our supplementary material.

\begin{algorithm}[ht]
\caption{\textbf{C}ausal-\textbf{O}rigin \textbf{REP}resentation (\textbf{COREP})}\label{algo:COR}
\begin{algorithmic}[1]
\STATE \textbf{Init:} env; VAE parameters $\theta, \phi$; policy parameters: $\psi$; replay buffer $\mathcal{B}$; TD buffer $\mathcal{B}_\delta$.
\FOR{$i=0,1,\ldots $}
\STATE Collect trajectory $\tau_i$ with $\pi_\psi({\bm{a}}|{\bm{s}, \bm{h}})$.
\STATE Update replay buffer $\mathcal{B}[i] \gets \tau_i$.
\FOR{$j=0,1,\ldots $}
\STATE Sample a batch of episodes $E_j$ from $\mathcal{B}$ and TD errors $\{\delta_k\}$ from $\mathcal{B}_\delta$.
\STATE Transform states into $\bm{X}$ through MLPs.
\STATE Compute ${\bm{A_X}}=\mathrm{Softmax}\left({\bm{X}}{\bm{X}}^{\mathrm{T}}\odot({\bm{1}}_N-{\bm{I}}_N)\right)$.
\STATE Compute $\delta_\alpha=\left({\sum_{|{\mathcal{B}_\delta}| - \alpha|{\mathcal{B}_\delta}|<k<|{\mathcal{B}_\delta}|}\delta_k}\right) / {\alpha|{\mathcal{B}_\delta}|}$. 
\IF{$\delta_\alpha \notin (\mu_{\delta}-\eta\sigma_{\delta}, \mu_{\delta}+\eta\sigma_{\delta})$} 
\STATE unfreeze weights of core-GAT.
\ELSE
\STATE freeze weights of core-GAT.
\ENDIF
\STATE Get graph representation $\bm{G}_\text{core}, \bm{G}_\text{general}$ from core-GAT and general-GAT.
\STATE Compute $\mathcal{L}_\text{guide}, \mathcal{L}_\text{MAG}, \mathcal{L}_\text{sparsity}$ according to Equation (\ref{eq:l-guide}, \ref{eq:l-sparsity-mag}).
\STATE Input $\bm{G}(\bm{s}) = \bm{G}_\text{core}\oplus \bm{G}_\text{general}$ into VAE encoder $q_\phi$ and infer $\mu_{\bm{h}}, \sigma_{\bm{h}}$.
\STATE Sample ${\bm{h}}\sim \mathcal{N}\left(\mu_{\bm{h}}, \sigma_{\bm{h}}\right)$
\STATE Decode $\hat{\bm{s}}$ from ${\bm{h}}$ using decoder $p_\theta$, then compute $\mathcal{L}_\text{VAE}$ according to Equation (\ref{eq:l-vae}).
\STATE Do policy optimization for $\pi_\psi({\bm{a}}|{\bm{s}, \bm{h}})$, then compute $\mathcal{L}_\text{policy}$ and TD error ${\delta}$.
% \STATE $\theta_Q \gets \theta_Q-\alpha_Q \nabla_{\theta_Q} \mathcal{J}_Q$ \COMMENT{Update critic}
% \STATE $\theta_\pi \leftarrow \theta_\pi-\alpha_\pi \nabla_{\theta_\pi} \mathcal{J}_\pi$ \COMMENT{Update actor}
\STATE Compute $\mathcal{L}_{\text{total}}$ according to Equation (\ref{eq:objective}) and use it for gradient-updating $\theta,\phi,\psi$.
\STATE Push $\delta$ into TD buffer $\mathcal{B}_\delta$.
\ENDFOR
\ENDFOR
\end{algorithmic}
\end{algorithm}

\newpage

\subsection{Hyperparameters}

We list the hyperparameters for MLP, GAT, and VAE structures in Table \ref{tab:structure-parameters}, and the hyperparameters for policy optimization and training in Table \ref{tab:training-parameters}.

\begin{table}[ht]
\centering
\caption{Hyperparameters for the structure of MLP, GAT, and VAE.}
\resizebox{\textwidth}{!}{
\begin{tabular}{cc}
\toprule[1pt]
\textbf{Hyperparameter}            & \textbf{Value}  \\ \midrule[1pt]
MLP activation                 & ReLU            \\ 
MLP hidden dim                 & 512             \\ 
MLP learning rate              & 1e-3            \\ 
GAT activation                 & ELU             \\ 
GAT hidden dim                 & 32 (Cartpole Swingup, Reacher Easy/Hard, Cup Catch, Cheetah Run) \\
                               & 64 (Otherwise)  \\ 
GAT node numbers               & 4 (Cartpole Swingup, Reacher Easy/Hard, Cup Catch) \\
                               & 8 (Cheetah Run, Hopper Stand) \\
                               & 16 (Otherwise)  \\ 
node feature dim               & 16 (Cartpole Swingup, Reacher Easy/Hard, Cup Catch, Cheetah Run) \\
                               & 32 (Otherwise)  \\ 
GAT head numbers               & 2 (Quadruped Walk, Fish Upright, Walker Walk, Swimmer Swimmer6/15) \\
                               & 1 (Otherwise)   \\ 
VAE encoder hidden dim         & 128             \\ 
VAE decoder hidden dim         & 64              \\ 
latent representation dim      & 4 (Cartpole Swingup, Reacher Easy/Hard, Cup Catch) \\
                               & 8 (Cheetah Run, Hopper Stand) \\
                               & 16 (Otherwise)  \\ \bottomrule[1pt]
\end{tabular}
}
\label{tab:structure-parameters}
\end{table}

\begin{table}[ht]
\centering
\caption{Hyperparameters for policy optimization and training.}
\resizebox{\textwidth}{!}{
\begin{tabular}{cc}
\toprule[1pt]
\textbf{Hyperparameter}      & \textbf{Value}  \\ \midrule[1pt]
Policy hidden dim       & 256 (Swimmer Swimmer6/15, Walker Walk, Fish Upright, Quadruped Walk) \\
                        & 128 (Otherwise)  \\ 
Policy learning rate    & 7e-4            \\ 
$\lambda_\text{1}$ (for $\mathcal{L}_\text{guide}$)            & 0.1             \\ 
$\lambda_\text{2}$ (for $\mathcal{L}_\text{MAG}/\mathcal{L}_\text{sparsity}/\mathcal{L}_\text{VAE}$)         & 1e-3            \\ 
PPO update epoch        & 16              \\ 
PPO $\gamma$                   & 0.97            \\ 
PPO $\varepsilon$ clip                & 0.1             \\ 
TD buffer size          & 2000            \\ 
Confidence level $\eta$              & 1.96            \\ \bottomrule[1pt]
\end{tabular}
}
\label{tab:training-parameters}
\end{table}

\newpage

\section{Full Experiment Details}\label{sec:full-exp}

\subsection{Details about Environment Settings.}

\begin{figure}[ht]
    \centering
    \includegraphics[width=\linewidth]{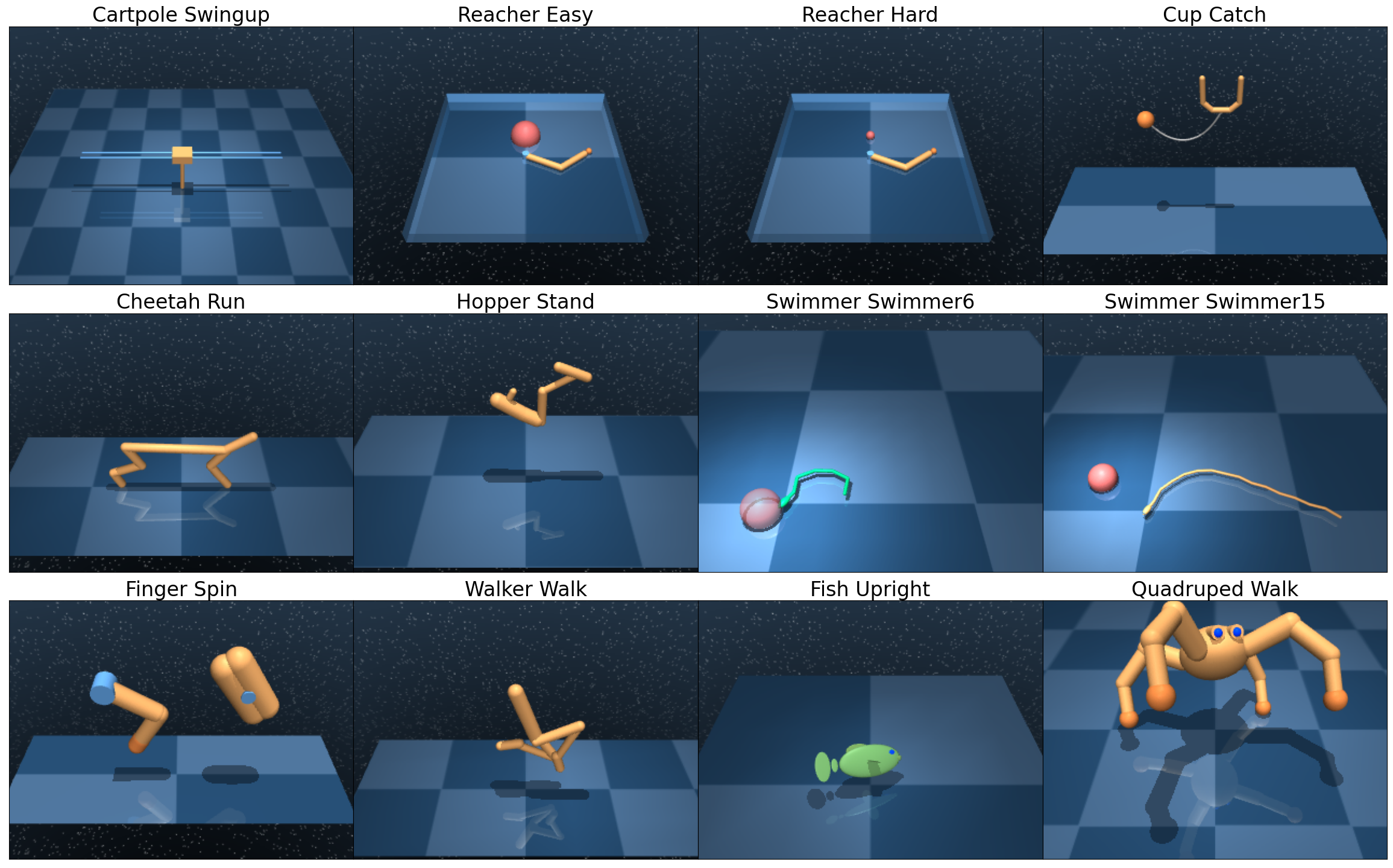}
    \caption{The environment we use in our experiment. We add non-stationary noise to the observations of these environments according to Equation (\ref{eq:non-stat}).}
    \label{fig:envs}
\end{figure}

Figure \ref{fig:envs} shows the environments we use in the experiment. We add non-stationary noise to the observations of these environments according to Equation (\ref{eq:non-stat}). These environments vary in terms of complexity, from low-dimensional problems like ``Reacher Easy'' to high-dimensional ones like ``Quadruped Walk''. All these tasks require the agent to understand and control its physical embodiment in order to achieve the desired goals. The specific descriptions of these environments and goals are as follows.

\textbf{Cartpole Swingup.} The cart can move along a one-dimensional track. The pole is attached to the cart with a joint allowing it to rotate freely. The initial state has the pole hanging down, and the goal is to apply forces to the cart such that the pole swings up and is balanced upright. Actions typically involve applying a horizontal force to the cart.

\textbf{Reacher Easy.} The agent is a two-joint robotic arm. The arm must move in a two-dimensional plane to touch a target position. The arm's state includes its joint angles and velocities. The action is the torque applied to each of the joints. The target's position is fixed in this version.

\textbf{Reacher Hard.} The task is the same as "Reacher Easy," but the target position is randomly placed in each episode, making the task more difficult as the agent has to learn to reach various positions.

\textbf{Cup Catch.} The agent is a robotic arm holding a cup, and there's a ball attached to the cup with a string. The arm needs to move in a way to swing the ball and catch it in the cup. The arm's state includes the position and velocity of the arm joints and the position and velocity of the ball. The actions are the torques applied at the arm's joints.

\textbf{Cheetah Run.} The agent is a model of a cheetah-like robot with 9 DoF(Degrees of Freedom): the agent can flex and extend its "spine," and each leg has two joints for flexing and extending. The agent's state includes the joint angles and velocities, and the actions are the torques applied to each of the joints. The goal is to move forward as fast as possible.

\textbf{Hopper Stand.} The agent is a one-legged robot, and its goal is to balance upright from a resting position. The agent's state includes the angle and angular velocity of the torso, as well as the joint angles and velocities. The actions are the torques applied to the joints.

\textbf{Swimmer Swimmer6.} The agent is a snake-like robot swimming in a two-dimensional plane. The robot has 6 joints, and the goal is to swim forward as fast as possible. The agent's state includes the joint angles and velocities, and the actions are the torques applied to the joints.

\textbf{Swimmer Swimmer15.} This is a more complex version of the Swimmer environment, with the agent being a 15-joint snake-like robot. Like the simpler version, the goal is to swim forward as fast as possible.

\textbf{Finger Spin.} The agent is a robot with two fingers, and there's a freely spinning object. The goal is to keep the object spinning and balanced on the fingertips. The state includes the positions and velocities of the fingers and the object, and the actions are the forces applied by the fingers.

\textbf{Walker Walk.} The agent is a bipedal robot, and the goal is to walk forward as fast as possible. The agent's state includes the angle and angular velocity of the torso, and the joint angles and velocities. The actions are the torques applied to the joints.

\textbf{Fish Upright.} The agent is a fish-like robot swimming in a three-dimensional fluid. The goal is to swim forward while maintaining an upright orientation. The agent's state includes the orientation and velocity of the fish, and the actions are the torques and forces applied to move the fish.

\textbf{Quadruped Walk.} The agent is a quadrupedal (four-legged) robot. Like the bipedal walker, the goal is to walk forward as fast as possible. The agent's state includes the angle and angular velocity of the torso, and the joint angles and velocities. The actions are the torques applied to the joints.

To ensure consistency in our conclusion, the experiments are conducted under various non-stationarity settings, which include `within-episode \& across-episode', `within-episode', and `across-episode' non-stationarities. These settings are respectively denoted as (W+A)-EP, W-EP, and A-EP. Specifically, these non-stationarities can be expressed as
\begin{equation}\label{eq:non-stat-full}
   s^\prime = f(s,a) + f(s,a)\cdot\alpha_d\left[c^t_1\cos(c_2^t\cdot t)+c_3^i\sin(c_4^i\cdot i)\right]
\end{equation}
\begin{equation}\label{eq:non-stat-w}
    s^\prime = f(s,a) + f(s,a)\cdot\alpha_d\left[c^t_1\cos(c_2^t\cdot t)\right]
\end{equation}
\begin{equation}\label{eq:non-stat-a}
    s^\prime = f(s,a) + f(s,a)\cdot\alpha_d\left[c_3^i\sin(c_4^i\cdot i)\right]
\end{equation}

We experimented only under (W+A)-EP when looking at the performance of the algorithm, while in a more detailed ablation study we experimented with all three different setings.

\subsection{Settings of Baselines}

For VariBAD, we meta-train the models (5000 batch size, 2 epochs for all experiments) and show the learning curves of meta-testing. The tasks parameters for meta-training are uniformly sampled from a Gaussian distribution $\mathcal{N}(0,1)$.

For all approaches, we use the same backbone algorithm for policy optimization, \ie, PPO with the same hyperparameters, as shown in Table \ref{tab:training-parameters}.

\newpage

\subsection{Full Results of Performance}

\begin{figure}[ht]
    \centering
    \includegraphics[width=\linewidth]{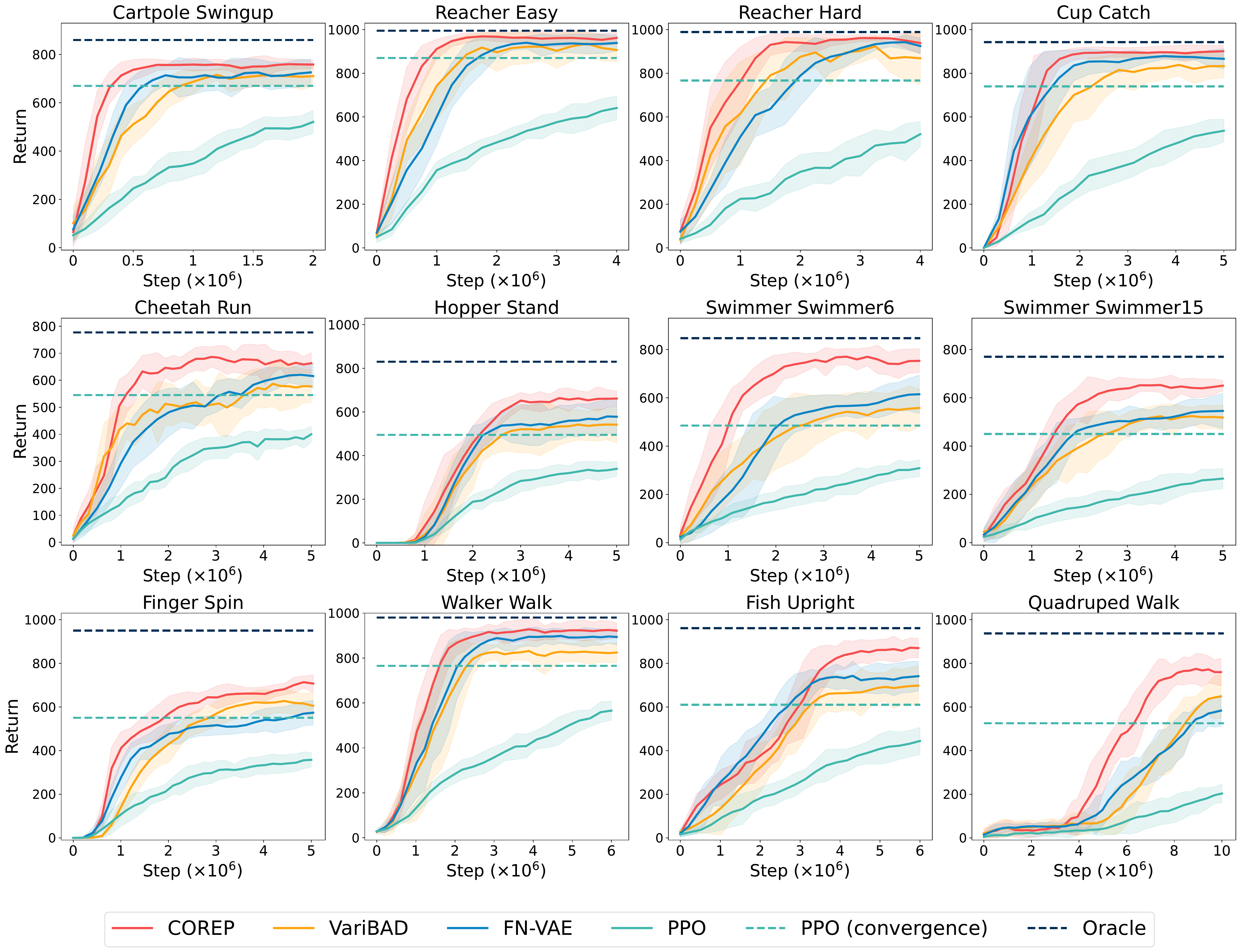}
    \caption{Learning curves of COREP and baselines in different environments. Solid curves indicate the mean of all trials with 5 different seeds. Shaded regions correspond to standard deviation among trials. The dashed lines represent the asymptotic performance of PPO and Oracle.}
    \label{fig:multi-env-perf-full}
\end{figure}

Figure \ref{fig:multi-env-perf-full} shows the full learning curves. We add non-stationary noise as Equation (\ref{eq:non-stat-full}) to all environments. According to the results, COREP consistently performs well in environments of different complexities, proving the effectiveness of the algorithm. Especially in \textit{Hopper Stand, Swimmer Swimmer6, Swimmer Swimmer15, Finger Spin, Fish Upright, and Quadruped Walk}, COREP demonstrates a larger performance gap, highlighting its superiority over baselines.

FN-VAE has the ability to approach our COREP in some simple environments (\textit{Cartpole Swingup, Reacher Easy, Reacher Hard, and Cup Catch}), but still exhibits significant variance, reflecting its instability, especially in more complex environments where it performs even worse than VariBAD (\textit{Finger Spin, Quadruped Walk}). VariBAD shows a large performance gap and variance in all environments, indicating poor stability to non-stationarity. PPO's performance is consistently the worst across all environments due to the lack of any optimization for non-stationarity.

\newpage

\subsection{Full Results of Ablation Study}

\begin{figure}[ht]
    \centering
    \includegraphics[width=\linewidth]{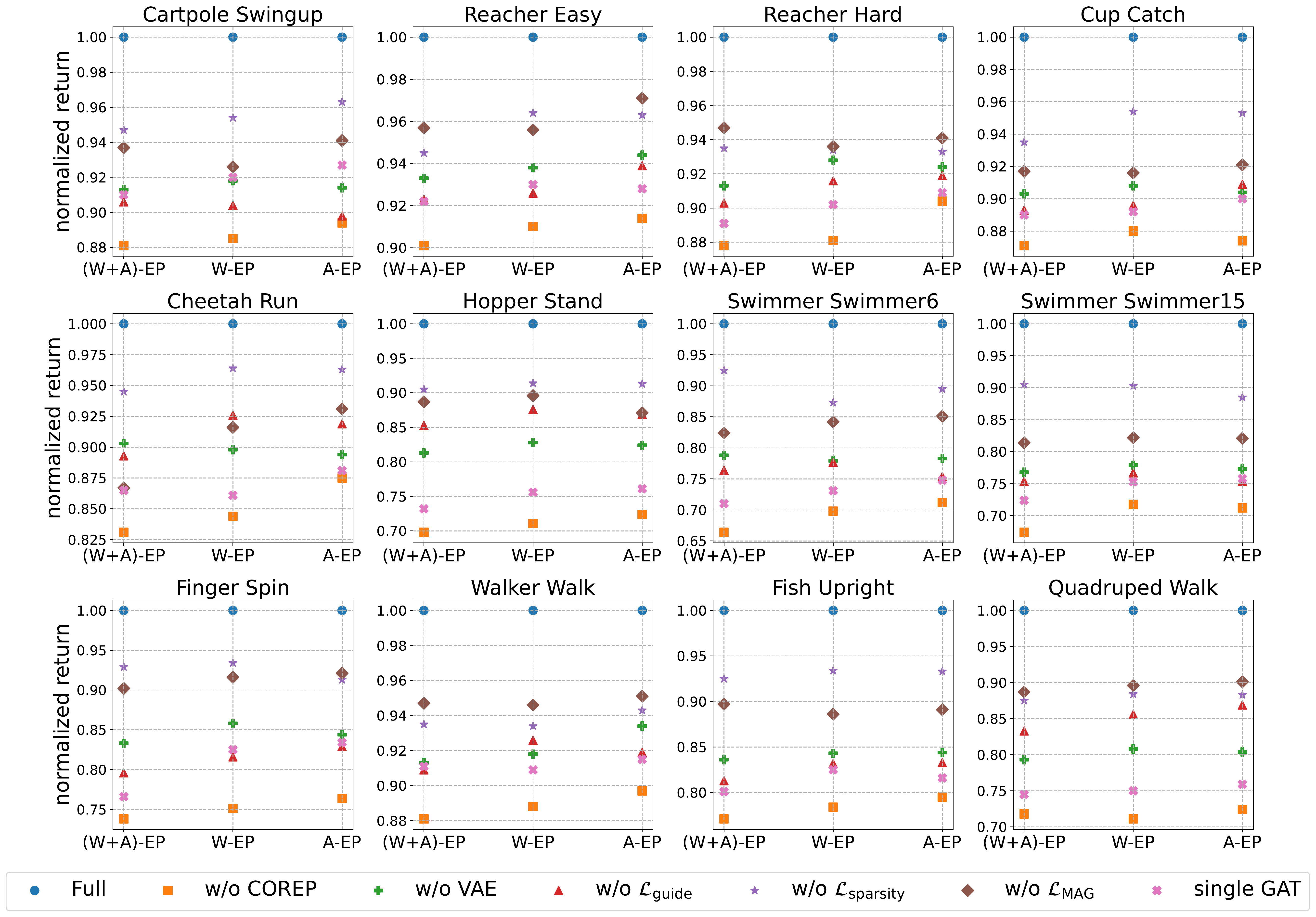}
    \caption{Final mean returns of 3 different trials on all environments with different components and non-stationarity settings. Returns are normalized to the full version of COREP in each environment.}
    \label{fig:ablation-full}
\end{figure}

Figure \ref{fig:ablation-full} shows the performance after removing different components in COREP. All (W+A)-EP, W-EP, and A-EP non-stationary noises, \ie Equation (\ref{eq:non-stat-full}, \ref{eq:non-stat-w}, \ref{eq:non-stat-a}), are separately added to the environments. Each point of Figure \ref{fig:ablation-full} represents the normalized return, which is used to observe the contribution of removed components to the overall algorithm. Specifically,
\begin{itemize}
    \item `w/o COREP' remove all COREP-specific designs and retaining only the VAE process;
    \item `w/o VAE' is a version without the VAE process;
    \item `w/o $\mathcal{L}_\text{guide}$' removes the guided update mechanism containing TD detection;
    \item `w/o $\mathcal{L}_\text{sparsity}$' removes the corresponding loss $\mathcal{L}_\text{sparsity}$ in Equation (\ref{eq:objective});
    \item `w/o $\mathcal{L}_\text{MAG}$' removes the corresponding loss $\mathcal{L}_\text{MAG}$ in Equation (\ref{eq:objective});
    \item `single GAT' maintains the same network structure without introducing a secondary GAT and corresponding update mechanism.
\end{itemize}

\newpage

\subsection{Full Results on Non-stationarity Degrees}

\begin{figure}[ht]
    \centering
    \includegraphics[width=\linewidth]{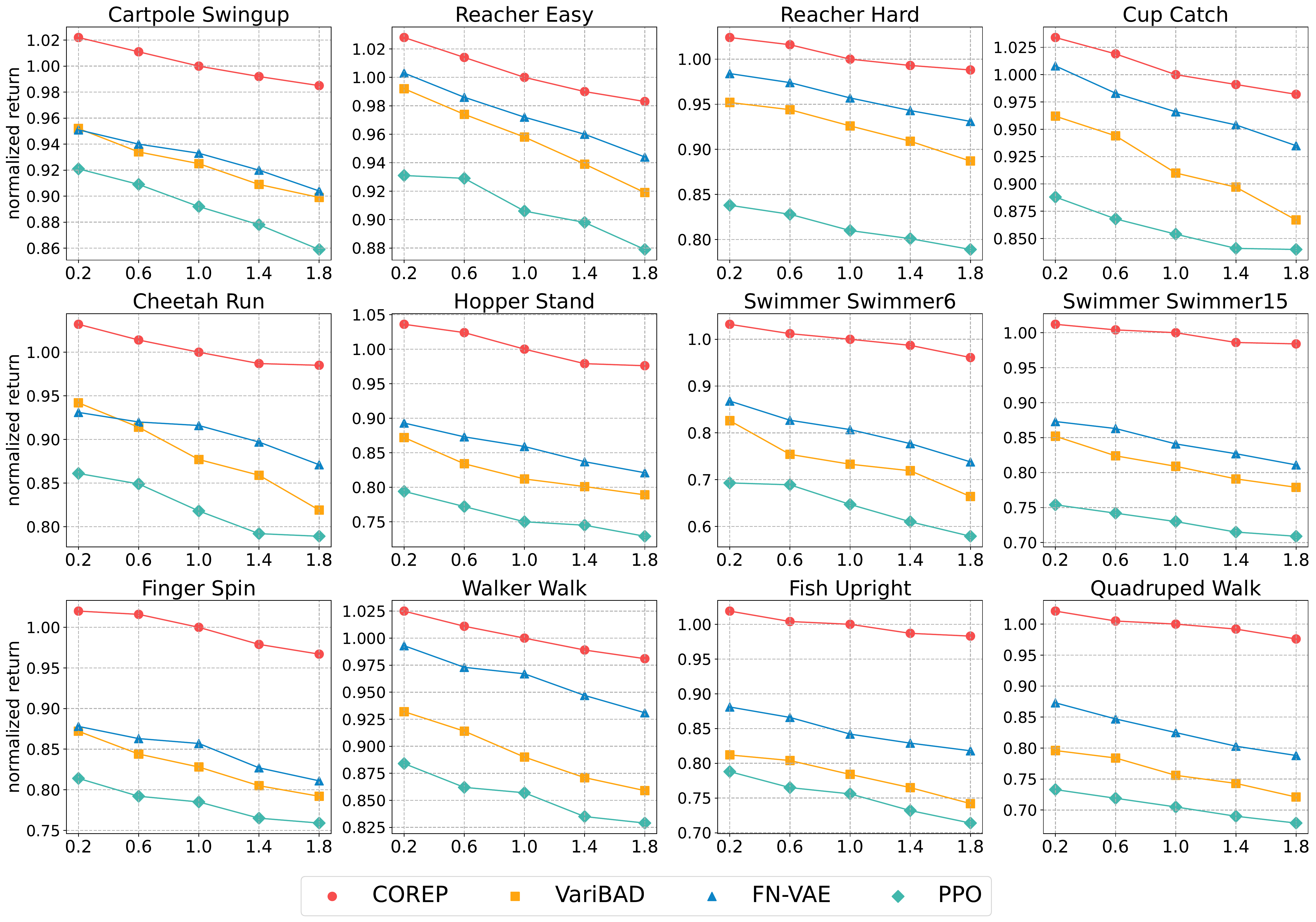}
    \caption{Final mean returns of 3 different trials on Cheetah Run and Swimmer Swimmer6 environments with different degrees of non-stationarity. Returns are normalized to the COREP algorithm with standard degree $1.0$.}
    \label{fig:non-stat-full}
\end{figure}

To analyze the impact of varying degrees of non-stationarity in the environment, we change the values of $\alpha_d$ in Equation (\ref{eq:non-stat-full}). As depicted in Figure \ref{fig:non-stat-full}, the results suggest that the performance of the compared baselines is more affected by the degree of non-stationarity. Conversely, COREP exhibits consistent performance when encountering different degrees of non-stationarity, further affirming our claim that COREP can effectively tackle more complex non-stationarity.

\newpage

\subsection{Visualization of Learned Graph}\label{sec:vis}

To visualize the graph learned by the COREP algorithm, we respectively show the weighted adjacency matrices of core-GAT and general-GAT after 5M steps in the \textit{Cartpole Swingup}, \textit{Reacher Hard}, and \textit{Cup Catch} environment in Figure \ref{fig:vis-cartpole-swingup}, \ref{fig:vis-reacher-hard}, \ref{fig:vis-cup-catch}. 
It is noteworthy that the number of graph nodes is set to be the same as the observation dimension of each environment to bring better empirical insights into how the dual graph actually functions. For further information about the actual meaning of each dimension in different environments, please refer to the DeepMind Control Suite technical report \citep{tassa2018deepmind}.

In these heatmaps, each value on a grid represents the weight of an edge from a node on the y-axis to a node on the x-axis. A higher value indicates a greater causal influence.

Based on the results, it can be seen that core-GAT indeed focuses more on a few core nodes in its learned graph structure, while general-GAT compensates for some overlooked detailed information by core-GAT. The results align well with our claim made in the manuscript.

\begin{figure}[ht]
    \centering
    \includegraphics[width=.75\linewidth]{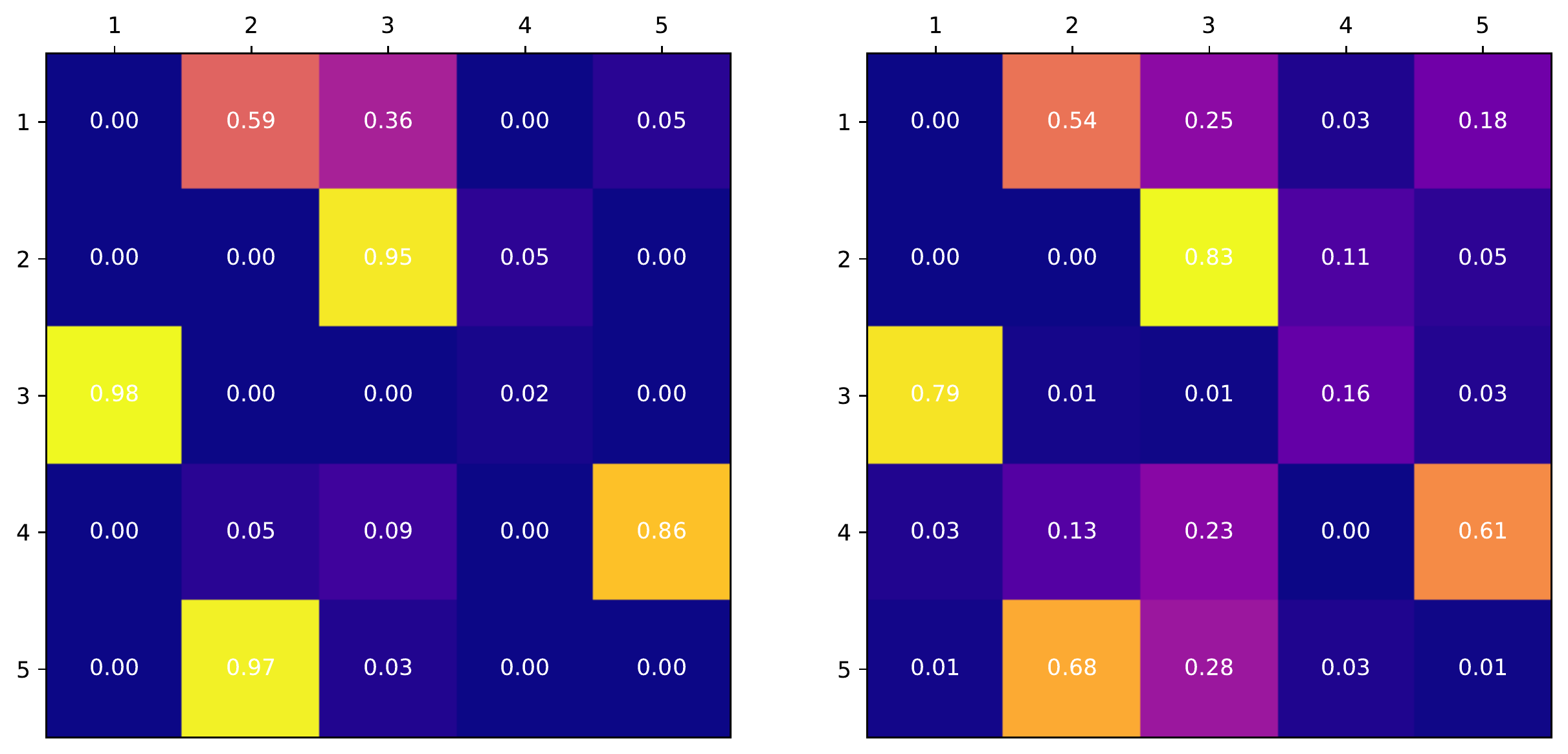}
    \caption{Weighted adjacency matrix of core-GAT (\textit{left}) and general-GAT (\textit{right}) in Cartpole Swingup after 5M steps.}
    \label{fig:vis-cartpole-swingup}
\end{figure}

\begin{figure}[htbp]
    \centering
    \includegraphics[width=.75\linewidth]{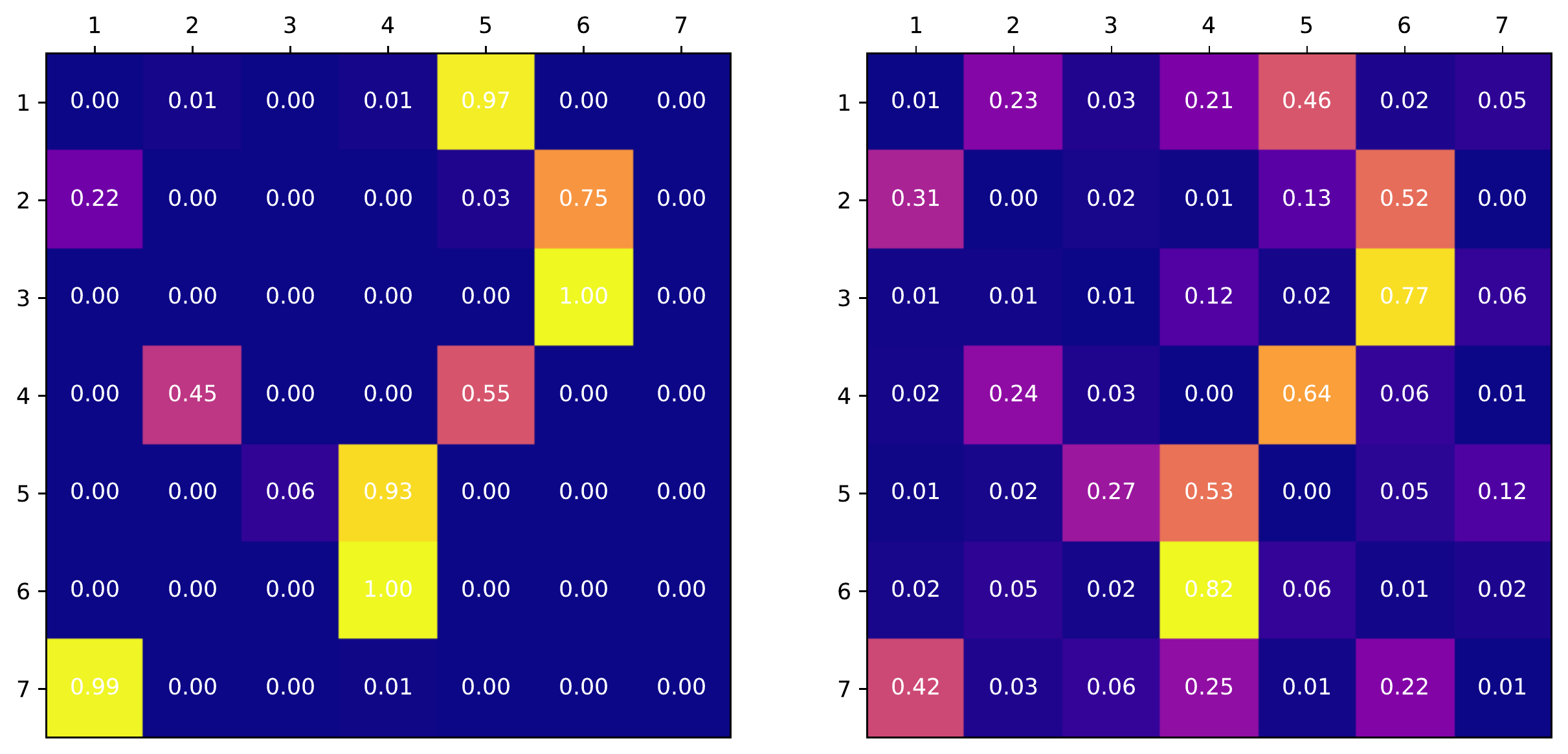}
    \caption{Weighted adjacency matrix of core-GAT (\textit{left}) and general-GAT (\textit{right}) in Reacher Hard after 5M steps.}
    \label{fig:vis-reacher-hard}
\end{figure}

\begin{figure}[htbp]
    \centering
    \includegraphics[width=.75\linewidth]{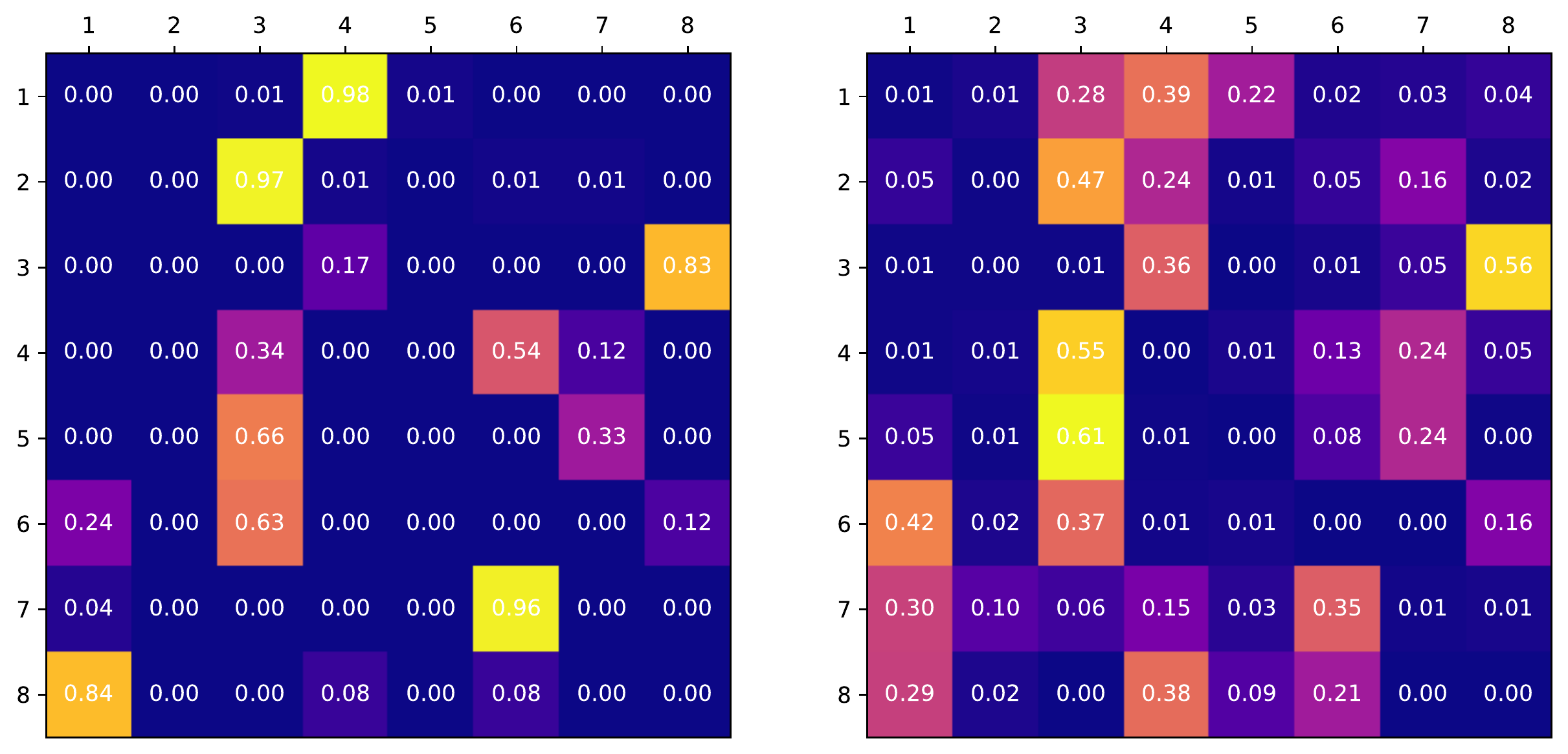}
    \caption{Weighted adjacency matrix of core-GAT (\textit{left}) and general-GAT (\textit{right}) in Cup Catch after 5M steps.}
    \label{fig:vis-cup-catch}
\end{figure}

\newpage

\subsection{Study on Tunning Parameters of Loss Terms}\label{sec:tuning}

We conduct additional experiments to study COREP's sensitivity to the hyperparameters $\lambda_1, \lambda_2$ in the objective function (\ref{eq:objective}). Results are shown in Table \ref{tab:sen-l1} and Table \ref{tab:sen-l2}. 

The results indicate that the performance is not sensitive to the values of these hyperparameters. Even with extensive adjustments to these parameters, COREP consistently surpasses the performance of SOTA baseline. Notably, we observe instances in certain environments where adjustments lead to even higher performance. This suggests that with more precise tuning, there is potential to further enhance COREP's effectiveness.

These findings are significant as they demonstrate that COREP's superior performance is largely attributed to its designs in structure and process, rather than relying on hyperparameter tuning. The low sensitivity to hyperparameter values also implies ease of use and adaptability in diverse settings.

\begin{table}[htbp]
\centering
\caption{Sensitivity results of parameter $\lambda_1$. The results indicate that the performance is not sensitive to $\lambda_1$, even with significant adjustments to its value, COREP still outperforms FN-VAE.}\label{tab:sen-l1}
\resizebox{\textwidth}{!}{
\begin{tabular}{ccccccc}
\toprule[1pt]
$\lambda_1$         & Cartpole Swingup   & Reacher Easy       & Reacher Hard       & Cup   Catch        & Cheetah Run        & Hopper   Stand     \\
\midrule[1pt]
$0.5$            & $ 732.3 \pm 30.6 $ & $ 947.6 \pm 24.6 $ & $ 938.4 \pm 29.2 $ & $ 868.5 \pm 24.7 $ & $ 634.8 \pm 41.6 $ & $ 651.4 \pm 29.3 $ \\
$0.1$ (original) & $ \bm{743.4 \pm 21.2} $ & $ \bm{964.6 \pm 17.3} $ & $ 947.2 \pm 23.1 $ & $ \bm{877.5 \pm 19.2} $ & $ \bm{651.1 \pm 44.3} $ & $ 645.5 \pm 25.8 $ \\
$0.05$           & $ 732.8 \pm 26.6 $ & $ 939.3 \pm 16.4 $ & $ \bm{949.4 \pm 25.7} $ & $ 870.1 \pm 19.8 $ & $ 642.7 \pm 52.8 $ & $ \bm{656.7 \pm 32.3} $ \\
$0.01$           & $ 722.3 \pm 28.4 $ & $ 945.2 \pm 23.3 $ & $ 934.3 \pm 19.5 $ & $ 864.6 \pm 25.5 $ & $ 629.4 \pm 57.2 $ & $ 641.3 \pm 33.8 $ \\
$0.001$          & $ 717.5 \pm 31.9 $ & $ 928.8 \pm 25.8 $ & $ 936.9 \pm 25.7 $ & $ 858.3 \pm 18.6 $ & $ 622.8 \pm 49.2 $ & $ 627.5 \pm 27.5 $ \\
\midrule
FN-VAE         & $ 710.3 \pm 64.5 $ & $ 913.3 \pm 38.7 $ & $ 928.1 \pm 21.9 $ & $ 851.3 \pm 31.6 $ & $ 606.5 \pm 75.3 $ & $ 580.9 \pm 47.3 $ \\
\bottomrule[1pt]
\end{tabular}
}
\end{table}

\begin{table}[htbp]
\centering
\caption{Sensitivity analysis of parameters $\lambda_2$. The results indicate that the performance is not sensitive to $\lambda_2$, even with significant adjustments to its value, COREP still outperforms FN-VAE.}\label{tab:sen-l2}
\resizebox{\textwidth}{!}{
\begin{tabular}{ccccccc}
\toprule[1pt]
$\lambda_2$ & Cartpole Swingup   & Reacher Easy       & Reacher Hard       & Cup Catch          & Cheetah Run        & Hopper Stand       \\
\midrule[1pt]
$0.01$              & $ 728.7 \pm 36.1 $ & $ 936.3 \pm 23.7 $ & $ 931.5 \pm 24.4 $ & $ 864.9 \pm 25.8 $ & $ 632.2 \pm 37.5 $ & $ 623.5 \pm 32.1 $ \\
$0.005$             & $ \bm{745.6 \pm 34.8} $ & $ 955.3 \pm 25.3 $ & $ \bm{953.6 \pm 22.1} $ & $ 872.5 \pm 21.5 $ & $ 644.5 \pm 36.1 $ & $ 638.1 \pm 24.6 $ \\
$0.001$ (original)  & $ 743.4 \pm 21.2 $ & $ \bm{964.6 \pm 17.3} $ & $ 947.2 \pm 23.1 $ & $ \bm{877.5 \pm 19.2} $ & $ \bm{651.1 \pm 44.3} $ & $ \bm{645.5 \pm 25.8} $ \\
$0.0005$            & $ 718.6 \pm 31.5 $ & $ 946.6 \pm 19.6 $ & $ 945.5 \pm 19.4 $ & $ 856.3 \pm 18.7 $ & $ 621.6 \pm 48.4 $ & $ 641.1 \pm 29.6 $ \\
$0.0001$            & $ 721.1 \pm 35.9 $ & $ 949.5 \pm 27.8 $ & $ 940.6 \pm 19.7 $ & $ 859.2 \pm 26.4 $ & $ 618.1 \pm 55.7 $ & $ 619.5 \pm 31.7 $ \\
\midrule
FN-VAE              & $ 710.3 \pm 64.5 $ & $ 913.3 \pm 38.7 $ & $ 928.1 \pm 21.9 $ & $ 851.3 \pm 31.6 $ & $ 606.5 \pm 75.3 $ & $ 580.9 \pm 47.3 $ \\
\bottomrule[1pt]
\end{tabular}
}
\end{table}

\newpage

\section{Compute Resource Details}

We list the hardware resources used in Table \ref{tab:computing-resources}, and list the training time required for a single trial in each environment in Table \ref{tab:run-time}.

\begin{table}[ht]
\centering
\caption{Computational resources for our experiments.}
\begin{tabular}{ccc}
\toprule[1pt]
\textbf{CPU }                     & \textbf{GPU}                      & \textbf{RAM}   \\
\midrule[1pt]
Intel I9-12900K@3.2GHz (24 Cores) & Nvidia RTX 3090 (24GB) $\times$ 2 & 256GB\\
\bottomrule[1pt]
\end{tabular}
\label{tab:computing-resources}
\end{table}

\begin{table}[ht]
\centering
\caption{Computing time of each single trial in different environments.}
\begin{tabular}{cc}
\toprule[1pt]
\textbf{Environment}       & \textbf{Training Time}  \\ \midrule[1pt]
Cartpole Swingup           & 12 hours                \\
Reacher Easy               & 16 hours                \\
Reacher Hard               & 20 hours                \\
Cup Catch                  & 16 hours                \\
Cheetah Run                & 24 hours                \\
Hopper Stand               & 28 hours                \\
Swimmer Swimmer6           & 28 hours                \\
Swimmer Swimmer15          & 36 hours                \\
Finger Spin                & 28 hours                \\
Walker Walk                & 28 hours                \\
Fish Upright               & 30 hours                \\
Quadruped Walk             & 42 hours                \\ \bottomrule[1pt]
\end{tabular}
\label{tab:run-time}
\end{table}

\section{Licenses}

In our code, we have used the following libraries which are covered by the corresponding licenses:

\begin{itemize}
    \item Numpy (BSD-3-Clause license)
    \item PyTorch (BSD-3-Clause license)
    \item PyTorch Geometric (MIT license)
    \item DeepMind Control (Apache-2.0 license)
    \item OpenAI Gym (MIT License)
\end{itemize}
%%%%%%%%%%%%%%%%%%%%%%%%%%%%%%%%%%%%%%%%%%%%%%%%%%%%%%%%%%%%%%%%%%%%%%%%%%%%%%%
%%%%%%%%%%%%%%%%%%%%%%%%%%%%%%%%%%%%%%%%%%%%%%%%%%%%%%%%%%%%%%%%%%%%%%%%%%%%%%%

\end{document}